\theoremstyle{definition}
\newtheorem{theorem}{Theorem}[section]
\theoremstyle{definition}
\newtheorem{definition}[theorem]{Definition}
\theoremstyle{definition}
\theoremstyle{definition}
\newtheorem{proposition}[theorem]{Proposition}
\theoremstyle{definition}
\newtheorem{example}[theorem]{Example}
\theoremstyle{definition}
\theoremstyle{definition}
\newtheorem{corollary}[theorem]{Corollary}
\newcommand{\E}[0]{\ensuremath{\mathbb{E}}}
\newcommand{\Exp}[2]{\ensuremath{\mathbb{E}_{#1}\left[#2\right]}}
\newcommand{\icr}{IndeCateR\xspace}
\definecolor{red_salsa}{HTML}{F94144}
\definecolor{orange_red}{HTML}{F3722C}
\definecolor{yellow_orange}{HTML}{F8961E}
\definecolor{mango_tango}{HTML}{F9844A}
\definecolor{maize_crayola}{HTML}{F9C74F}
\definecolor{pistachio}{HTML}{90BE6D}
\definecolor{jungle_green}{HTML}{43AA8B}
\definecolor{steel_teal}{HTML}{4D908E}
\definecolor{queen_blue}{HTML}{577590}
\definecolor{celadon_blue}{HTML}{277DA1}
\definecolor{softer_black}{HTML}{2A2A2A}
\definecolor{poinant_purple}{HTML}{A87DC1}
\definecolor{purpose_purple}{HTML}{4D5AC0}
\definecolor{bliss_brown}{HTML}{C36D3A}
\definecolor{turd_turqoise}{HTML}{44BAC3}
\definecolor{soft_white}{HTML}{F1F1F1}
\definecolor{softer_white}{HTML}{E5E5E5}
\definecolor{grey}{HTML}{A5A5A5}
\definecolor{soft_black}{HTML}{101010}
\definecolor{softer_black}{HTML}{2A2A2A}
\newcommand{\ie}{i.e.,\xspace}
\newcommand{\reinforce}{REINFORCE\xspace}
\newcommand{\scater}{SCateR\xspace}
\newcommand{\indecater}{IndeCateR\xspace}
\newcommand{\logtrick}{Log-Derivative trick\xspace}
\newcommand{\catlogtrick}{CatLog-Derivative trick\xspace}
\tikzset{
    mynode/.style={
        fill=grey, align=center,
        fill opacity=0.2, text opacity=1,
        circle,align=center, minimum width=3em
    },
    mysqnode/.style={
        fill=grey,
        fill opacity=0.2, text opacity=1,
        rounded corners=0.75em,align=center, minimum width=2.5em, minimum height=2.5em
    },
    mypath/.style={
        -Triangle[round], draw=softer_black, ultra thick
    },
    thinpath/.style={
        -Triangle[round], draw=softer_black, thick
    }
}
\title{Differentiable Sampling of 
Categorical Distributions\\ Using the CatLog-Derivative Trick}
\author{
\href{mailto:lennert.desmet@kuleuven.be}{Lennert De Smet} \\
KU Leuven \\
\And
Emanuele Sansone\\
KU Leuven \\
\And
Pedro Zuidberg Dos Martires \\
\"Orebro University
}
\numberwithin{equation}{section}
\begin{document}

\maketitle
\thispagestyle{fancy}

\begin{abstract}
    Categorical random variables can faithfully represent the discrete and uncertain aspects of data as part of a discrete latent variable model.
    Learning in such models necessitates taking gradients with respect to the parameters of the categorical probability distributions, which is often intractable due to their combinatorial nature.
    A popular technique to estimate these otherwise intractable gradients is the Log-Derivative trick. This trick forms the basis of the well-known REINFORCE gradient estimator and its many extensions.
    While the Log-Derivative trick allows us to differentiate through samples drawn from categorical distributions, it does not take into account the discrete nature of the distribution itself.
    Our first contribution addresses this shortcoming by introducing the \catlogtrick -- a variation of the Log-Derivative trick tailored towards categorical distributions.
    Secondly, we use the \catlogtrick to introduce \icr, a novel and unbiased gradient estimator for the important case of products of independent categorical distributions with provably lower variance than REINFORCE.
    Thirdly, we empirically show that \icr can be efficiently implemented and that its gradient estimates have significantly lower bias and variance for the same number of samples compared to the state of the art.
\end{abstract}
\section{Introduction}

Categorical random variables naturally emerge in many domains in AI, such as language modelling, reinforcement learning and neural-symbolic AI~\cite{garcez2022neural}.
They are compelling because they can faithfully represent the discrete concepts present in data in a sound probabilistic fashion.
Unfortunately, inference in probabilistic models with categorical latent variables is usually computationally intractable due to its combinatorial nature. This intractability often leads to the use of sampling-based, approximate inference techniques, which in turn poses problems to gradient-based learning as sampling is an inherently non-differentiable process.

In order to bypass this non-differentiability, two main classes of gradient estimators have been developed.
On the one hand, there is a range of unbiased estimators based on the Log-Derivative trick and the subsequent \reinforce gradient estimator~\cite{williams1992simple}.
On the other hand, we have biased estimators that use continuous relaxations to which the reparametrisation trick \cite{ruiz2016generalized} can be applied, such as the Gumbel-Softmax trick~\cite{jang2017categorical,maddison2017concrete}.

A clear advantage of the REINFORCE estimator over relaxation-based estimators is its unbiased nature.
However, REINFORCE tends to be sample-inefficient and its gradient estimates exhibit high variance in practice.
To resolve these issues, methods have been proposed that modify \reinforce by, for instance, adding control variates~\cite{richter2020vargrad, titsias2022double}. These modified estimators have been shown to deliver more robust gradient estimates than standard \reinforce.

Instead of modifying \reinforce, we take a different approach and modify the \logtrick by explicitly taking into account that we are working with multivariate categorical distributions. 
We call this first contribution the \emph{\catlogtrick}.
Interestingly, we show that our \catlogtrick leads to Rao-Blackwellised estimators~\cite{casella1996rao}, immediately giving us a guaranteed reduction in variance.
The \catlogtrick can also be seen as a generalisation of the Local Expectation Gradients (LEG)~\cite{titsias2015local} capable of exploiting more structural, distributional properties.
This connection to LEG will be clarified throughout the paper.

As a second contribution, we propose \indecater (read as \q{indicator}), a gradient estimator for the special case of independent categorical random variables.
\indecater is a hyperparameter-free estimator that can be implemented efficiently by leveraging parallelisation on modern graphical processing units (GPUs).
Thirdly, we empirically show that \indecater is competitive with comparable state-of-the-art gradient estimators on a range of standard benchmarks from the literature.

\section{Notation and Preliminaries}

Throughout this paper, we consider expectations with respect to multivariate categorical probability distributions of the form
\begin{align}\label{eq:generalexp}
\Exp{\mathbf{X} \sim p(\mathbf{X})}{f(\mathbf{X})}
=
\sum_{\mathbf{x}\in \Omega(\mathbf{X})} p(\mathbf{x}) f(\mathbf{x}),
\end{align}
where we assume this expectation to be finite. 
The symbol $\mathbf{X}$ denotes a random vector $(X_1, \dots, X_{D})$ of $D$ categorical random variables while $p(\mathbf{X})$ denotes a multivariate probability distribution. 
The expression $\mathbf{X}\sim p(\mathbf{X})$ indicates that the random vector $\mathbf{X}$ is distributed according to $p(\mathbf{X})$.
On the right-hand side of Equation~\eqref{eq:generalexp} we write the expectation as an explicit sum over $\Omega(\mathbf{X})$, the finite sample space of the random vector $\mathbf{X}$, using $\mathbf{x} = (x_1, \dots, x_D)$ for the specific assignments of the random vector $(X_1, \dots, X_{D})$.

Given an order of the random variables in $\mathbf{X}$, we can induce a factorisation of the joint probability distribution as follows
\begin{align}
p(\mathbf{X}) = \prod_{d=1}^D p(X_d \mid \mathbf{X}_{<d}).
\end{align}
Here,  $\mathbf{X}_{<d}$  denotes the ordered set of random variables $ (X_1, \dots, X_{d - 1})$.
Similarly, $\mathbf{X}_{>d}$ will denote the ordered set $(X_{d+1}, \dots, X_D)$ in subsequent sections. 

When performing gradient-based learning, we are interested in partial derivatives of the expected value in~\eqref{eq:generalexp}, \ie
$
    \partial_\lambda \Exp{\mathbf{X} \sim p(\mathbf{X})}{f(\mathbf{X})}.
$
Here, we take the partial derivative of the expectation with respect to the parameter $\lambda$ and assume that the distribution $p(\mathbf{X})$ and the function $f(\mathbf{X})$ depend on a set of parameters $\Lambda$ with $\lambda \in \Lambda$.
For probability distributions to which the reparametrisation trick does not apply, we can rewrite the partial derivative using the \logtrick. 
\begin{theorem}[Log-Derivative Trick \cite{williams1992simple}] Let $p(\mathbf{X})$ be a probability distribution and $f(\mathbf{X})$ such that its expectation is finite, with both functions depending on a set of parameters $\Lambda$. Then, it holds that
\begin{align}
\label{eq:logder}
\partial_\lambda \Exp{\mathbf{X} \sim p(\mathbf{X})}{f(\mathbf{X})}
= 
\Exp{\mathbf{X} \sim p(\mathbf{X})}{\partial_\lambda f(\mathbf{X})}
+
\Exp{\mathbf{X} \sim p(\mathbf{X})}{f(\mathbf{X}) \partial_\lambda \log p(\mathbf{X})}.
\end{align}
\end{theorem}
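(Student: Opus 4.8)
The plan is to exploit the fact that for categorical distributions the expectation in Equation~\eqref{eq:generalexp} is a \emph{finite} sum over $\Omega(\mathbf{X})$, so differentiation and summation commute without any measure-theoretic subtleties. First I would write
\begin{align*}
\partial_\lambda \Exp{\mathbf{X} \sim p(\mathbf{X})}{f(\mathbf{X})}
=
\partial_\lambda \sum_{\mathbf{x}\in \Omega(\mathbf{X})} p(\mathbf{x}) f(\mathbf{x})
=
\sum_{\mathbf{x}\in \Omega(\mathbf{X})} \partial_\lambda \big( p(\mathbf{x}) f(\mathbf{x}) \big),
\end{align*}
where the second equality pushes the derivative inside the sum. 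The finiteness assumption on the expectation guarantees that each summand is well-defined and that this interchange is legitimate.

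Next I would apply the product rule to each summand and split the resulting sum into two pieces:
\begin{align*}
\sum_{\mathbf{x}\in \Omega(\mathbf{X})} \partial_\lambda \big( p(\mathbf{x}) f(\mathbf{x}) \big)
=
\sum_{\mathbf{x}\in \Omega(\mathbf{X})} p(\mathbf{x})\, \partial_\lambda f(\mathbf{x})
+
\sum_{\mathbf{x}\in \Omega(\mathbf{X})} f(\mathbf{x})\, \partial_\lambda p(\mathbf{x}).
\end{align*}
The first sum is immediately recognisable as $\Exp{\mathbf{X} \sim p(\mathbf{X})}{\partial_\lambda f(\mathbf{X})}$. For the second sum, the key step is the elementary log-derivative identity $\partial_\lambda p(\mathbf{x}) = p(\mathbf{x})\, \partial_\lambda \log p(\mathbf{x})$, which follows from the chain rule wherever $p(\mathbf{x}) > 0$. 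Substituting it rewrites the second sum as $\sum_{\mathbf{x}} p(\mathbf{x}) f(\mathbf{x})\, \partial_\lambda \log p(\mathbf{x}) = \Exp{\mathbf{X} \sim p(\mathbf{X})}{f(\mathbf{X})\, \partial_\lambda \log p(\mathbf{X})}$, and collecting both terms yields the claimed identity in Equation~\eqref{eq:logder}.

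The only genuine obstacle is the handling of assignments $\mathbf{x}$ with $p(\mathbf{x}) = 0$, for which $\partial_\lambda \log p(\mathbf{x})$ is undefined; the standard resolution is to restrict the sum to the support of $p$ and to assume this support does not vary with $\lambda$, so that the rewriting introduces no spurious boundary contributions. In the finite categorical setting this is the whole story. I note in passing that for the fully general (continuous) statement the sum would become an integral and the interchange of $\partial_\lambda$ with the integral would additionally require a dominated-convergence argument under mild regularity of $p$ and $f$; this is the point where the most care would be needed, but it is not required for the categorical distributions that are the focus here.
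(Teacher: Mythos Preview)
Your derivation is correct and is exactly the standard argument for the \logtrick in the discrete setting: expand the expectation as the finite sum in Equation~\eqref{eq:generalexp}, push the derivative through, apply the product rule, and use $\partial_\lambda p(\mathbf{x}) = p(\mathbf{x})\,\partial_\lambda \log p(\mathbf{x})$ to recast the second piece as an expectation. Your remarks about zero-probability states and about the dominated-convergence step needed in the continuous case are also the right caveats.

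As for comparison: the paper does not actually supply a proof of this theorem. It is stated as a classical result with a citation to \cite{williams1992simple} and then used as the starting point for the \catlogtrick in Theorem~\ref{theorem:cat_logtrick}. So there is nothing in the paper to compare your argument against; your proposal simply fills in the well-known derivation that the paper takes for granted.
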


In general, both expectations in Equation~\eqref{eq:logder} are intractable and often estimated with a Monte Carlo scheme.
The most immediate such estimation is provided by the \reinforce gradient estimator~\cite{williams1992simple}
\begin{align}
    \partial_\lambda \Exp{\mathbf{X} \sim p(\mathbf{X}) } {f(\mathbf{X})}
    \approx
    \frac{1}{N}\sum_{n = 1}^N
    \left(
    \partial_\lambda f(\mathbf{x}^{(n)})
    +
    f(\mathbf{x}^{(n)})
    \partial_\lambda \log p(\mathbf{x}^{(n)})
    \right).
    \label{eq:reinforce}
\end{align}
The superscript on $\mathbf{x}^{(n)}$ denotes that it is the $n^{\text{th}}$ sample vector drawn from $p(\mathbf{X})$.

A well-known problem with the \reinforce gradient estimator is the high variance stemming from the second term in Equation~\eqref{eq:reinforce}. A growing body of research has been tackling this problem by proposing variance reduction techniques~\cite{grathwohl2017backpropagation,richter2020vargrad, titsias2022double, tucker2017rebar}.
In what follows we will focus on estimating this second term and drop the first term,
since it can be assumed to be unproblematic.

\section{The CatLog-Derivative Trick}\label{sec:theory}

The standard \logtrick and its corresponding gradient estimators are applicable to both discrete and continuous probability distributions. However, this generality limits their usefulness when it comes to purely categorical random variables. For example, the \reinforce gradient estimator suffers from high variance when applied to problems involving high-dimensional multivariate categorical random variables.
In such a setting there are exponentially many possible states to be sampled, which makes it increasingly unlikely that a specific state gets sampled.
We now introduce the \catlogtrick that reduces the exponential number of states arising in a multivariate categorical distribution by exploiting the distribution's factorisation.

\begin{restatable}[CatLog-Derivative Trick]{theorem}{theocatlogtrick}
\label{theorem:cat_logtrick}
Let $p(\mathbf{X})$ be a multivariate categorical probability distribution that depends on a set of parameters $\Lambda$ and assume $\Exp{\mathbf{X}\sim p(\mathbf{X})}{f(\mathbf{X})}$ is finite. Then, it holds that $\partial_{\lambda} \Exp{\mathbf{X} \sim p(\mathbf{X})}{f(\mathbf{X})}$ is equal to
\begin{align}
    \sum_{d=1}^D\sum_{ x_\delta \in \Omega(X_d) }
    \Exp{\mathbf{X}_{<d}\sim p(\mathbf{X}_{<d})}
    {
    \partial_\lambda p(x_\delta \mid \mathbf{X}_{<d} )
    \Exp{\mathbf{X}_{>d}
    \sim p(\mathbf{X}_{>d} \mid x_\delta, \mathbf{X}_{<d})}
        {
            f(\mathbf{X}_{\neq d}, x_\delta)
        }
    }
    \label{eq:cat_logtrick}
\end{align}
\end{restatable}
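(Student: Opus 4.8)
The plan is to start from the expectation written as an explicit finite sum over the sample space $\Omega(\mathbf{X})$, substitute the chain-rule factorisation $p(\mathbf{x}) = \prod_{d=1}^D p(x_d \mid \mathbf{x}_{<d})$, and then differentiate. Since $\Omega(\mathbf{X})$ is finite, the partial derivative commutes with the summation, so that (treating $f$ as independent of $\lambda$, i.e.\ isolating the distributional term of Equation~\eqref{eq:logder}, which is the object of interest per the discussion above)
\begin{align}
\partial_\lambda \Exp{\mathbf{X} \sim p(\mathbf{X})}{f(\mathbf{X})}
=
\sum_{\mathbf{x} \in \Omega(\mathbf{X})} f(\mathbf{x}) \, \partial_\lambda \prod_{d=1}^D p(x_d \mid \mathbf{x}_{<d}).
\end{align}
The central tool is then the product rule applied to this product of $D$ conditional factors.

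First I would expand the product rule to obtain $\partial_\lambda \prod_{d} p(x_d \mid \mathbf{x}_{<d}) = \sum_{d=1}^D \partial_\lambda p(x_d \mid \mathbf{x}_{<d}) \prod_{j \neq d} p(x_j \mid \mathbf{x}_{<j})$, and then interchange the two finite sums so that $d$ becomes the outermost index. For each fixed $d$, I would split the summation over $\mathbf{x}$ into three blocks, $\sum_{\mathbf{x}_{<d}} \sum_{x_d} \sum_{\mathbf{x}_{>d}}$, and correspondingly partition the surviving product of conditionals (all factors except the $d$-th) into a prefix over indices $j < d$ and a suffix over indices $j > d$.

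The key algebraic step is recognising that these two blocks are themselves normalised distributions. The prefix $\prod_{j < d} p(x_j \mid \mathbf{x}_{<j})$ telescopes into the marginal $p(\mathbf{x}_{<d})$, while the suffix $\prod_{j > d} p(x_j \mid \mathbf{x}_{<j})$ -- in which every factor is conditioned on variables that include $x_d$ -- telescopes by the chain rule into the conditional $p(\mathbf{x}_{>d} \mid x_d, \mathbf{x}_{<d})$. Renaming the summation variable $x_d$ as $x_\delta$, the innermost sum $\sum_{\mathbf{x}_{>d}} f(\mathbf{x}_{<d}, x_\delta, \mathbf{x}_{>d}) \, p(\mathbf{x}_{>d} \mid x_\delta, \mathbf{x}_{<d})$ is exactly the inner conditional expectation $\Exp{\mathbf{X}_{>d} \sim p(\mathbf{X}_{>d} \mid x_\delta, \mathbf{X}_{<d})}{f(\mathbf{X}_{\neq d}, x_\delta)}$, and the remaining sum over $\mathbf{x}_{<d}$ weighted by $p(\mathbf{x}_{<d})$ produces the outer expectation over $\mathbf{X}_{<d}$. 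Collecting the sum over $x_\delta \in \Omega(X_d)$ and over $d$ then reassembles Equation~\eqref{eq:cat_logtrick}.

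I expect the main obstacle to be the bookkeeping in the suffix factorisation: one must verify carefully that $\prod_{j > d} p(x_j \mid \mathbf{x}_{<j})$ really is the conditional $p(\mathbf{x}_{>d} \mid x_d, \mathbf{x}_{<d})$ rather than a partial product that still depends on quantities being summed over. This is where the chosen ordering of the variables and the precise meaning of $\mathbf{x}_{<j}$ -- which coordinates are fixed by the outer sums and which are being summed in the inner block -- must be tracked with care. Everything else reduces to a routine interchange of finite sums and a single application of the product rule.
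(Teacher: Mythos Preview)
Your proposal is correct and follows essentially the same route as the paper's proof: factorise $p(\mathbf{X})$, distribute the derivative across the $D$ factors, then regroup the remaining factors into the marginal $p(\mathbf{x}_{<d})$ and the conditional $p(\mathbf{x}_{>d}\mid x_\delta,\mathbf{x}_{<d})$ to recover the nested expectations. The only cosmetic difference is that the paper first invokes the \logtrick and uses $\log\prod_d = \sum_d \log$ before collapsing $p(x_\delta\mid\mathbf{x}_{<d})\,\partial_\lambda\log p(x_\delta\mid\mathbf{x}_{<d})$ back into $\partial_\lambda p(x_\delta\mid\mathbf{x}_{<d})$, whereas you apply the product rule directly and skip the logarithmic detour --- the two are algebraically identical.
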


\begin{proof}
We start by applying the standard \logtrick and fill in the product form of the categorical distribution followed by pulling this product out of the logarithm and the expectation
\begin{align}
    \Exp{\mathbf{X}\sim p(\mathbf{X}) } {f(\mathbf{X}) \partial_\lambda \log p(\mathbf{X})  }  
    &=
    \Exp{\mathbf{X}\sim p(\mathbf{X})} {f(\mathbf{X}) \partial_\lambda \log \prod_{d=1}^D p(X_d \mid \mathbf{X}_{<d} ) },
    \\
    &=\sum_{d=1}^D  \Exp{\mathbf{X}\sim p(\mathbf{X})  }{f(\mathbf{X}) \partial_\lambda\log p(X_d  \mid \mathbf{X}_{<d} ) }.
    \label{eq:sum_logderivative}
\end{align}
To continue, we write out the expectation explicitly and write the sum for the random variable $X_d$ separately, resulting in
\begin{align}
    &
    \sum_{d=1}^D
    \sum_{\mathbf{x} \in \Omega( \mathbf{X}_{\neq d}) }
    \sum_{x_\delta \in \Omega( X_d)}
    p(\mathbf{x}_{\neq d}, x_\delta)
    f(\mathbf{x}_{\neq d}, x_\delta)
    \partial_\lambda \log p(x_\delta \mid \mathbf{x}_{<d} ).
    \label{eq:catlog_prefinfal}
\end{align}
Next, we factorize the joint probability $p(\mathbf{x}_{\neq d}, x_\delta)$ as
$
        p(\mathbf{x}_{> d} \mid  x_\delta, \mathbf{x}_{< d}) 
        p(  x_\delta \mid \mathbf{x}_{< d})
        p(  \mathbf{x}_{< d})   
$.
Multiplying the second of these factors with $\partial_\lambda \log p(x_\delta \mid \mathbf{x}_{<d} )$ gives us
$\partial_\lambda p(x_\delta \mid \mathbf{x}_{<d} )$.
Finally, plugging $\partial_\lambda p(x_\delta \mid \mathbf{x}_{<d} )$ into Equation~\eqref{eq:catlog_prefinfal} gives the desired expression.
\end{proof}

Intuitively, the \catlogtrick decomposes the Log-Derivative trick into an explicit sum of multiple Log-Derivative tricks, one for each of the categorical random variables present in the multivariate distribution.
Next, we use the \catlogtrick to define a novel gradient estimator that exploits the structure of a probability distribution by following the variable ordering of the distribution's factorisation and effectively Rao-Blackwellises~\cite{casella1996rao} REINFORCE.

\begin{definition}[The \scater gradient estimator]\label{def:scater} We define the Structured Categorical \reinforce (\scater) estimator via the expression
\begin{align}
    \partial_\lambda \Exp{\mathbf{X} \sim p(\mathbf{X}) } {f(\mathbf{X})}
    \approx
    \sum_{d=1}^D
    \sum_{ x_\delta \in \Omega(X_d) }
    \frac{1}{N}
    \sum_{n_d=1}^{N}
    \partial_\lambda p(x_\delta \mid \mathbf{x}_{<d}^{(n_d)} )
        {
            f(\mathbf{x}_{< d}^{(n_d)}, x_\delta, \mathbf{x}_{> d}^{(n_d)})
        },
    \label{eq:scater}
\end{align}
where the sample $\mathbf{x}_{>d}^{(n_{d})}$ is drawn while conditioning on $x_\delta$ and $\mathbf{x}_{<d}^{(n_d)}$. The subscript on $n_{d}$ indicates that \emph{different samples} can be drawn for every $d$.
\end{definition}

\begin{proposition} 
\label{prop:rao_blackwell}
The \scater estimator Rao-Blackwellises \reinforce.
\end{proposition}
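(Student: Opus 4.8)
The plan is to invoke the Rao--Blackwell theorem~\cite{casella1996rao}: if $Y$ is an unbiased estimator of a quantity $\theta$ and $T$ is any statistic, then $\E[Y \mid T]$ is also unbiased for $\theta$ and satisfies $\variance{\E[Y\mid T]} \le \variance{Y}$. Concretely, I would exhibit each score term of \scater as a conditional expectation of the matching \reinforce score term. Fix a variable index $d$ and isolate the $d$-th summand of the factorised \reinforce score, $Y_d = f(\mathbf{X})\,\partial_\lambda \log p(X_d \mid \mathbf{X}_{<d})$ with $\mathbf{X}\sim p(\mathbf{X})$; summing $Y_d$ over $d$ recovers the full second term of Equation~\eqref{eq:reinforce}, since $\partial_\lambda \log p(\mathbf{X}) = \sum_d \partial_\lambda \log p(X_d\mid\mathbf{X}_{<d})$, exactly as in the first two lines of the proof of Theorem~\ref{theorem:cat_logtrick}.

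The central step is to choose the conditioning statistic so that averaging out $X_d$ yields the analytic sum over $\Omega(X_d)$ appearing in \scater, while the future block $\mathbf{X}_{>d}$ stays sampled. To this end I would reparametrise the conditional draw of the future as $\mathbf{X}_{>d} = g(X_d, \mathbf{X}_{<d}, U)$ with an auxiliary noise variable $U$ drawn independently of $X_d$ (inverse-CDF sampling), so that $g(x_\delta,\mathbf{X}_{<d},U)\sim p(\mathbf{X}_{>d}\mid x_\delta,\mathbf{X}_{<d})$ for each fixed $x_\delta$, and then condition on $T_d = (\mathbf{X}_{<d}, U)$. Because $U$ carries no information about $X_d$, the law of $X_d$ given $T_d$ is still the prior factor $p(x_\delta\mid\mathbf{X}_{<d})$, so that
\begin{align}
\E[Y_d \mid T_d]
= \sum_{x_\delta\in\Omega(X_d)} p(x_\delta\mid\mathbf{X}_{<d})\,\partial_\lambda\log p(x_\delta\mid\mathbf{X}_{<d})\, f(\mathbf{X}_{<d}, x_\delta, g(x_\delta,\mathbf{X}_{<d},U)).
\end{align}
Collapsing $p(x_\delta\mid\mathbf{X}_{<d})\,\partial_\lambda\log p(x_\delta\mid\mathbf{X}_{<d})$ into $\partial_\lambda p(x_\delta\mid\mathbf{X}_{<d})$ -- the very manipulation closing the proof of Theorem~\ref{theorem:cat_logtrick} -- turns the right-hand side into precisely the $d$-th summand of \scater in Equation~\eqref{eq:scater}, with $g(x_\delta,\mathbf{X}_{<d},U)$ playing the role of the conditionally drawn $\mathbf{x}_{>d}^{(n_d)}$.

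With this identification, the tower property gives $\E[\E[Y_d\mid T_d]] = \E[Y_d]$, re-establishing unbiasedness, and the Rao--Blackwell inequality gives $\variance{\E[Y_d\mid T_d]} \le \variance{Y_d}$ for every $d$. I expect the main obstacle to be the bookkeeping of the future block: one must verify that conditioning on the \emph{noise} $U$ rather than on the realised $\mathbf{X}_{>d}$ is what keeps the marginal over $X_d$ equal to the prior conditional, since conditioning on a realised future would instead introduce the posterior $p(x_\delta\mid\mathbf{X}_{<d},\mathbf{X}_{>d})$ and destroy the identity. If one wishes \scater to draw an \emph{independent} future for each value $x_\delta$, the same argument goes through by augmenting the space with one independent noise variable per value and conditioning on all of them, as $Y_d$ still depends only on the noise indexed by the realised $X_d$ and hence retains its \reinforce law. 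A secondary point to settle is the aggregation over $d$: because the estimator is a sum and \scater may use separate samples for each $d$, I would either argue per term and invoke independence across $d$, or condition the whole score $\sum_d Y_d$ on a common statistic, confirming that the per-term variance reductions assemble into a reduction for the total estimator.
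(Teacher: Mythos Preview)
Your proposal is correct and shares the paper's core idea: decompose the score as $\sum_d Y_d$ with $Y_d = f(\mathbf{X})\,\partial_\lambda\log p(X_d\mid\mathbf{X}_{<d})$ and exhibit each \scater summand as a conditional expectation of $Y_d$. The paper's proof is considerably more informal than yours: it simply juxtaposes the \scater term and the \reinforce term and asserts that the former is the expectation over $X_d$ of the latter, without naming a conditioning statistic or addressing how the future block $\mathbf{x}_{>d}^{(n)}$ behaves across the two expressions. Your noise-reparametrisation device --- writing $\mathbf{X}_{>d}=g(X_d,\mathbf{X}_{<d},U)$ and conditioning on $T_d=(\mathbf{X}_{<d},U)$ rather than on the realised future --- is exactly what makes the Rao--Blackwell structure precise, and your observation that conditioning on the realised $\mathbf{X}_{>d}$ would replace the prior factor by the posterior $p(x_\delta\mid\mathbf{X}_{\neq d})$ and break the identity is a genuine subtlety the paper elides. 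The aggregation issue you flag is also real: the paper's subsequent corollary invokes the law of total variance, which strictly requires a common conditioning statistic, so the variance bound for the variant of \scater with fresh samples per $d$ is not fully established there either; your suggestion to either condition the full sum on a common statistic or to argue via independence across $d$ is the right way to close that gap.
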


\begin{proof}
We start from Equation~\eqref{eq:reinforce} for the \reinforce estimator, where we ignore the first term and factorize the probability distribution similar to Equation~\eqref{eq:sum_logderivative}
\begin{align}
    \partial_\lambda \Exp{\mathbf{X} \sim p(\mathbf{X}) } {f(\mathbf{X})}
    \approx
    \frac{1}{N}\sum_{n = 1}^N
    f(\mathbf{x}^{(n)})
    \partial_\lambda \log p(\mathbf{x}^{(n)})
    \approx
\sum_{d=1}^D
    \frac{1}{N}\sum_{n = 1}^N
    f(\mathbf{x}^{(n)})
    \partial_\lambda \log p(x_d^{(n)}).
    \label{eq:reinforce_logout}
\end{align}
For notational conciseness, we drop the subscript on $n_{d}$ and simply use $n$ to identify single samples.
Now we compare Equation~\eqref{eq:scater} and Equation~\eqref{eq:reinforce_logout} term-wise for $N\rightarrow \infty$
\begin{align}
    \sum_{ n=1}^N
    \Exp{X_d \sim p( X_d \mid \mathbf{x}_{<d}^{(n)} ) }
    {
        f(\mathbf{x}_{< d}^{(n)}, X_d, \mathbf{x}_{> d}^{(n)})
        \partial_\lambda \log p(X_d {\mid} \mathbf{x}_{<d}^{(n)} ) 
    }
    {=}
    \sum_{ n=1}^N
    f(\mathbf{x}_{< d}^{(n)}, x_{d}^{(n)}, \mathbf{x}_{> d}^{(n)})
    \partial_\lambda \log p(x_d^{(n)}).
    \nonumber
\end{align}
In the equation above, we see that \scater takes the expected value for $X_d$ (left-hand side) and computes it exactly using an explicit sum over the space $\Omega(X_d)$, whereas \reinforce (right-hand side) uses sampled values.
This means, in turn, that the left-hand side is a Rao-Blackwellised version of the right-hand side. Doing this for every $d$ 
gives us a Rao-Blackwellised version for \reinforce, \ie the \scater estimator.
\end{proof}

\begin{corollary}[Bias and Variance]
The \scater estimator is unbiased and its variance is upper-bounded by the variance of \reinforce. 
\end{corollary}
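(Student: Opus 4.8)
The plan is to obtain both assertions as immediate consequences of Proposition~\ref{prop:rao_blackwell} together with the two classical properties of Rao-Blackwellisation~\cite{casella1996rao}: it preserves the mean and it never increases the variance. Since Proposition~\ref{prop:rao_blackwell} exhibits \scater as the Rao-Blackwellisation of \reinforce, it suffices to (i) establish that \reinforce is itself unbiased and (ii) invoke the variance-reduction half of the Rao-Blackwell theorem.

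For (i), I would argue unbiasedness in one of two equivalent ways. Directly, each summand of \reinforce in Equation~\eqref{eq:reinforce} is an independent draw whose expectation is precisely the corresponding expectation in the Log-Derivative trick, Equation~\eqref{eq:logder}, so the sample mean is unbiased for the true gradient (up to the first term we have chosen to drop). Alternatively, and more transparently, \scater is the natural Monte Carlo estimator of the \catlogtrick expression in Equation~\eqref{eq:cat_logtrick}: for each pair $(d,x_\delta)$ its nested sample average is unbiased for the nested expectation, so by linearity \scater is unbiased for the quantity in~\eqref{eq:cat_logtrick}, which Theorem~\ref{theorem:cat_logtrick} identifies with $\partial_\lambda \Exp{\mathbf{X}\sim p(\mathbf{X})}{f(\mathbf{X})}$. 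The second route has the advantage of exposing the conditioning structure that the variance argument then reuses.

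For the variance bound I would apply the law of total variance. Writing $Y$ for a \reinforce contribution and $T$ for the statistic on which Proposition~\ref{prop:rao_blackwell} conditions, the Rao-Blackwellised quantity is the conditional expectation $\Exp{}{Y\mid T}$, and
\begin{align}
\variance{Y} = \Exp{}{\variance{Y\mid T}} + \variance{\Exp{}{Y\mid T}},
\label{eq:ltv}
\end{align}
whence $\variance{\Exp{}{Y\mid T}} = \variance{Y} - \Exp{}{\variance{Y\mid T}} \le \variance{Y}$, since the conditional variance is non-negative. This formalises the intuition that evaluating the inner expectation over $X_d$ exactly, rather than by sampling, can only remove variance.

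I expect the main obstacle to be that the Rao-Blackwellisation in Proposition~\ref{prop:rao_blackwell} is carried out term-by-term, with a \emph{different} conditioning set $\mathbf{X}_{\neq d}$ for each $d$, rather than as a single conditional expectation of the whole estimator. Equation~\eqref{eq:ltv} bounds each summand's variance, but the variance of the full sum also involves cross-covariances between summands, so the per-term inequalities do not compose automatically. To close this gap I would either exploit that \scater may draw independent samples $\mathbf{x}^{(n_d)}$ for distinct $d$ (the $n_d$ in Definition~\ref{def:scater}), decoupling the summands so that the total variance is a sum of per-term variances each controlled by~\eqref{eq:ltv}, or recast the entire estimator as a single conditional expectation against an augmented statistic and apply~\eqref{eq:ltv} once. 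Making this composition rigorous, and aligning the sampling conventions of \scater and \reinforce so the comparison is genuinely like-for-like, is where the actual care is required.
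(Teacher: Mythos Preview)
Your approach is essentially the same as the paper's: the paper's entire proof is the one-liner that the claim ``follows immediately from Proposition~\ref{prop:rao_blackwell}, the law of total expectation and the law of total variance,'' which is exactly the Rao-Blackwell argument you outline via Equation~\eqref{eq:ltv}. Your additional discussion of the cross-covariance subtlety (different conditioning sets for each $d$) and the role of independent samples across $d$ actually goes beyond what the paper itself engages with---the paper does not spell out how the per-term bounds compose---so your proposal is, if anything, more careful than the original.
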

\begin{proof}
    This follows immediately from Proposition~\ref{prop:rao_blackwell}, 
    the law of total expectation and the law of total variance \cite{ blackwell1947conditional,radhakrishna1945information}.
\end{proof}

\paragraph{Computational complexity.}
Consider Equation~\eqref{eq:scater} and observe that none of the random variables has a sample space larger than $K = \max_d (\abs{\Omega(X_d)})$. 
Computing our gradient estimate requires performing three nested sums with lower bound $1$ and upper bounds equal to $D$, $K$ and $N$, respectively.
These summations result in a time complexity of $\mathcal{O}(D \cdot K \cdot N)$.
Leveraging the parallel implementation of prefix sums on GPUs~\cite{ladner1980parallel}, a time complexity of $\mathcal{O}(\log D + \log K + \log N)$ can be obtained, which allows for the deployment of \scater in modern deep architectures.

\paragraph{Function evaluations.}
The main limitation of \scater is the number of function evaluations it requires to provide its estimates. Indeed, each term in the three nested sums of Equation~\eqref{eq:scater} involves a different function evaluation, meaning an overall number of at most $D \cdot \max_d (\abs{\Omega(X_d)})\cdot N$ function evaluations is necessary for every estimate. Fortunately, these evaluations can often be parallelised in modern deep architectures, leading to a positive trade-off between function evaluations and performance in our experiments (Section~\ref{sec:experiments}).

\paragraph{Connection to Local Expectation Gradients (LEG).}
LEG~\cite{titsias2015local} was proposed as a gradient estimator tailored for variational proposal distributions. Similar to the \catlogtrick, it also singles out variables from a target expectation by summing over the variables' domains. However, it does so one variable at a time by depending on the Markov blanket of those variables. This dependency has the disadvantage of inducing a weighted expression, as variables downstream of a singled-out variable $X_i$ are sampled independently of the values assigned to $X_i$. In this sense, LEG can intuitively be related to Gibbs sampling where a variable is marginalised given possible assignments to its Markov blanket instead of being sampled.

In contrast, the \catlogtrick is applicable to any probability distribution with a known factorisation, even in the case where different factors share parameters. It sums out variables by following the variable ordering of this factorisation rather than depending on the Markov blanket. Following the topology induced by the distribution's factorisation naturally avoids the disconnect between summed-out and downstream variables for which LEG required a weighting term and makes the \catlogtrick more related to ancestral sampling. Moreover, computing these LEG weights adds additional computational complexity. A more technical and formal comparison between LEG and the \catlogtrick can be found in the appendix.

\section{The \indecater Gradient Estimator}

Using the \catlogtrick derived in the previous section we are now going to study a prominent special case of multivariate categorical distributions. That is, we will assume that our probability distribution admits the independent factorisation
$
    p(\mathbf{X}) = \prod_{d=1}^D p_d(X_d).
$
Note that all $D$ different distributions still depend on the same set of learnable parameters $\Lambda$.
Furthermore, we subscript the individual distributions $p_d$ as they can no longer be distinguished by their conditioning sets.
Plugging in this factorisation into Theorem~\ref{theorem:cat_logtrick} gives us the {\em Independent Categorical \reinforce} estimator, or \emph{\indecater} for short.

\begin{restatable}[\indecater]{proposition}{propindecator}
\label{prop:indecator}
Let $p(\mathbf{X})$ be a multivariate categorical probability distribution that depends on a set of parameters $\Lambda$ and factorises as $p(\mathbf{X}) =\prod_{d=1}^D  p_d(X_d) $, then the gradient of a finite expectation 
$\Exp{\mathbf{X} \sim p(\mathbf{X})}{f(\mathbf{X})}$ can be estimated with 
\begin{align}
    \sum_{d = 1}^D
    \sum_{x_\delta \in \Omega(X_d)} 
    \partial_\lambda p_d (x_\delta)
    \frac{1}{N}
    \sum_{n_{d} = 1}^N
    f(\mathbf{x}_{\neq d}^{(n_{d})}, x_\delta),
    \label{eq:indecator}
\end{align}
where $\mathbf{x}_{\neq d}^{(n_{d})}$ are samples drawn from $p(\mathbf{X}_{\neq d})$.
\end{restatable}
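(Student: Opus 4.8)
The plan is to obtain Equation~\eqref{eq:indecator} as a direct specialisation of the \catlogtrick (Theorem~\ref{theorem:cat_logtrick}) to the independent factorisation $p(\mathbf{X}) = \prod_{d=1}^D p_d(X_d)$, followed by a single Monte Carlo step. First I would substitute this factorisation into Equation~\eqref{eq:cat_logtrick}. Under independence the conditional factor $p(x_\delta \mid \mathbf{X}_{<d})$ collapses to the marginal $p_d(x_\delta)$, so that $\partial_\lambda p(x_\delta \mid \mathbf{X}_{<d}) = \partial_\lambda p_d(x_\delta)$ no longer depends on $\mathbf{X}_{<d}$. This lets me pull the derivative factor out of the outer expectation over $\mathbf{X}_{<d}$.

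Next I would exploit the conditional independence to collapse the two nested expectations into one. Since $p(\mathbf{X}_{>d} \mid x_\delta, \mathbf{X}_{<d}) = p(\mathbf{X}_{>d})$ and $\mathbf{X}_{<d}$ is drawn from its own marginal, the product $p(\mathbf{X}_{<d})\, p(\mathbf{X}_{>d})$ equals $p(\mathbf{X}_{\neq d})$, and the inner and outer expectations merge into a single expectation $\Exp{\mathbf{X}_{\neq d}\sim p(\mathbf{X}_{\neq d})}{f(\mathbf{X}_{\neq d}, x_\delta)}$. At this stage the exact gradient reads
\[
    \sum_{d=1}^D \sum_{x_\delta \in \Omega(X_d)} \partial_\lambda p_d(x_\delta)\, \Exp{\mathbf{X}_{\neq d}\sim p(\mathbf{X}_{\neq d})}{f(\mathbf{X}_{\neq d}, x_\delta)}.
\]
Finally I would replace each inner expectation by an $N$-sample Monte Carlo average over i.i.d. draws $\mathbf{x}_{\neq d}^{(n_d)} \sim p(\mathbf{X}_{\neq d})$, recovering Equation~\eqref{eq:indecator}; unbiasedness of this replacement is immediate because the expectation is finite and the draws are i.i.d. for each fixed $d$ and $x_\delta$.

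I do not expect a serious obstacle here, as the statement is essentially a corollary of Theorem~\ref{theorem:cat_logtrick}. The one place demanding care is the bookkeeping that justifies merging the two expectations: I must confirm that conditioning on $x_\delta$ genuinely drops out of $p(\mathbf{X}_{>d} \mid x_\delta, \mathbf{X}_{<d})$ under the independent factorisation, so that the sampling distribution for $\mathbf{X}_{\neq d}$ is truly $p(\mathbf{X}_{\neq d})$ and independent of the summation variable $x_\delta$. Once that is established, pulling $\partial_\lambda p_d(x_\delta)$ outside and invoking linearity of expectation completes the argument.
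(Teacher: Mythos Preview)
Your proposal is correct and follows essentially the same route as the paper: specialise Equation~\eqref{eq:cat_logtrick} to the independent factorisation so that $p(x_\delta\mid\mathbf{X}_{<d})$ collapses to $p_d(x_\delta)$, pull out $\partial_\lambda p_d(x_\delta)$, merge the two nested expectations into a single one over $\mathbf{X}_{\neq d}$, and then replace that expectation by an $N$-sample Monte Carlo average. The only (minor) difference is that you spell out more explicitly why the conditioning on $x_\delta$ drops from $p(\mathbf{X}_{>d}\mid x_\delta,\mathbf{X}_{<d})$, which the paper leaves implicit.
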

\begin{proof}
    We start by looking at the expression in Equation~\eqref{eq:cat_logtrick}. Using the fact that we have a set of independent random variables, we can simplify $p(x_\delta \mid \mathbf{X}_{<d})$ to $p_d (x_\delta )$.
    As a result, the gradient of the expected value can be rewritten as
    \begin{align}
    \partial_{\lambda} \Exp{\mathbf{X} \sim p(\mathbf{X})}{f(\mathbf{X})}
    &=
    \sum_{d=1}^D\sum_{ x_\delta \in \Omega(X_d) }
    \Exp{\mathbf{X}_{<d}\sim p(\mathbf{X}_{<d})}
    {
    \partial_\lambda p_d (x_\delta)
    \Exp{\mathbf{X}_{>d}
    \sim p(\mathbf{X}_{>d} )}
        {
            f(\mathbf{X}_{\neq d}, x_\delta)
        }
    }
    \\
    &=
    \sum_{d=1}^D\sum_{ x_\delta \in \Omega(X_d) }
    \partial_\lambda p_d (x_\delta)
    \Exp{\mathbf{X}_{\neq d}
    \sim p(\mathbf{X}_{\neq d} )}
        {
            f(\mathbf{X}_{\neq d}, x_\delta)
        }
    \end{align}
    Drawing $N$ samples for the $D-1$ independent random variables $\mathbf{X}_{\neq d}$ and for each term in the sum over $d$ then gives us the estimate stated in the proposition. 
\end{proof}

\begin{mdframed}
\begin{example}[Independent Factorisation]
\label{ex:catlog_vs_log}

Let us consider a multivariate distribution involving three 3-ary independent categorical random variables. Concretely, this gives us
\begin{align}
    p(X_1, X_2, X_3)
    =
    p_1(X_1) p_2(X_2) p_3(X_3),
\end{align}
where $X_1$, $X_2$ and $X_3$ can take values from the set $\Omega(X) =\{1, 2, 3 \}$. Taking this specific distribution and plugging it into Equation~\eqref{eq:indecator} for the \indecater gradient estimator now gives us
\begin{align}
    \sum_{d = 1}^3
    \sum_{x_\delta \in \{1, 2, 3 \} } 
    \partial_\lambda p_d (x_\delta)
    \frac{1}{N}
    \sum_{n_{d} = 1}^N
    f(\mathbf{x}_{\neq i}^{(n_{d})}, x_\delta).
\end{align}
In order to understand the difference between the \logtrick and the \catlogtrick, we are going to to look at the term for $d=2$ and consider the single-sample estimate
\begin{align}
    \partial_\lambda p_2 (1)
    f(x_1, 1, x_3)
    +
\partial_\lambda p_2 (2)
    f(x_1, 2, x_3)
    +
\partial_\lambda p_2 (3)
    f(x_1, 3, x_3),
\end{align}
where $x_1$ and $x_3$ are sampled values for the random variables $X_1$ and $X_3$.
These samples could be different ones for $d\neq 2$.
The corresponding single sample estimate using \reinforce instead of \indecater would be
$
    \partial_\lambda p_2 (x_2)
    f(x_1, x_2, x_3).
$

We see that for \reinforce we sample all the variables whereas for \indecater we perform the explicit sum for each of the random variables in turn and only sample the remaining variables.
\end{example}\end{mdframed}

In Section~\ref{sec:experiments}, we demonstrate that performing the explicit sum, as shown in Example~\ref{ex:catlog_vs_log}, leads to practical low-variance gradient estimates. 
Note how, in the case of $D = 1$, Equation~\eqref{eq:indecator} reduces to the exact gradient.
With this in mind, we can interpret \indecater as computing exact gradients for each single random variable $X_d$ with respect to an approximation of the function
$\Exp{\mathbf{X}_{\neq d}
    \sim p(\mathbf{X}_{\neq d} )}
        {
            f(\mathbf{X}_{\neq d}, X_d)
        }
$.

\paragraph{LEG in the independent case.}
The gap between the LEG and \catlogtrick gradient estimates grows smaller in the case the underlying distributions factorises into independent factors. The additional weighting function for LEG collapses to $1$, resulting in similar expressions. However, two main differences remain. First, LEG does not support factors with shared parameters, although this problem can be mitigated by applying the chain rule. Second, and more importantly, LEG does not allow for different samples to be drawn for different variables. While this difference seems small, we will provide clear evidence that it can have a significant impact on performance (Section~\ref{sec:nesy}). 


\section{Related Work}
\label{sec:relatedwork}


Apart from LEG, another closely related work is the RAM estimator~\citep{tokui2017evaluating}. RAM starts by reparametrising the probability distribution using the Gumbel-Max trick followed by a marginalisation. We show in Section~\ref{sec:theory} that this reparametrisation step is unnecessary, allowing \scater to explicitly retain the conditional dependency structure.
Here we also see the different objectives of our work and \citeauthor{tokui2017evaluating}'s. While they proposed RAM in order to theoretically study the quality of control variate techniques, we are interested in building practical estimators that can exploit the structure of specific distributions. 

Moreover, just like LEG, \citet{tokui2017evaluating} did not study the setting of a shared parameter space $\Lambda$ between distributions, although this being the most common setting in modern deep-discrete and neural-symbolic architectures.
Hence, an experimental evaluation for this rather common setting is missing in \citet{tokui2017evaluating}. By providing such an evaluation, we show that efficiently implemented Rao-Blackwellised gradient estimators for categorical random variables are a viable option in practice when compared to variance reduction schemes based on control variates.

Such variance reduction methods for \reinforce aim to reduce the variance by subtracting a mean-zero term, called the baseline, from the estimate~\citep{bengio2013estimating,paisley2012variational}. Progress in this area is mainly driven by multi-sample, \ie sample-dependent, baselines~\citep{grathwohl2017backpropagation,gregor2014deep,gu2015muprop,mnih2014neural,ranganath2014black,titsias2022double,tucker2017rebar} and leave-one-out baselines~\citep{kool2019buy,kool2020estimating,mnih2016variational,richter2020vargrad,salimans2014using,shi2022gradient}.
Variance can further be reduced by coupling multiple samples and exploiting their dependencies~\citep{dimitriev2021carms,dimitriev2021arms,dong2020disarm,dong2021coupled,yin2019arm,yin2019arsm,yin2020probabilistic}. These latter methods usually introduce bias, which has to be resolved by, for instance, an importance weighing step \citep{dong2020disarm}. 

A general drawback of baseline variance reduction methods is that they often involve a certain amount of computational overhead, usually in the form of learning optimal parameters or computing statistics. 
This computational overhead can be justified by assuming that the function in the expectation is costly to evaluate. 
In such cases, it might pay off to perform extra computations if that means the expensive function is evaluated fewer times.

Another popular, yet diametrically opposite, approach to low-variance gradient estimates for categorical random variables is the concrete distribution \citep{maddison2017concrete}. The concrete distribution is a continuous relaxation of the categorical distribution using the Gumbel-Softmax trick \citep{jang2017categorical}.
Its main drawback is that it results in biased gradient estimates.
Even though this bias can be controlled by a temperature parameter, the tuning of this parameter is highly non-trivial in practice.

\section{Experiments}
\label{sec:experiments}

In this section we evaluate the performance of \indecater on a set of standard benchmarks from the literature. 
Firstly, two synthetic experiments (Section~\ref{sec:synthetic_comp_exact} and Section~\ref{sec:synthetic_optimisation}) will be discussed.
In Section~\ref{sec:dvae_exp}, a discrete variational auto-encoder (DVAE) experiment is optimised on three standard datasets. 
Lastly, we study the behaviour of \indecater on a standard problem taken from the neural-symbolic literature (Section~\ref{sec:nesy}).

We mainly compare \indecater to \reinforce leave-one-out (RLOO) \cite{kool2019buy,salimans2014using} and to the Gumbel-Softmax trick.
The former is a strong representative of methods reducing variance for \reinforce while the latter is a popular and widely adopted choice in deep-discrete architectures~\citep{huijben2022review}.


We implemented \indecater and RLOO in TensorFlow \cite{tensorflow2015-whitepaper}, from where we also took the implementation of the concrete distribution.\footnote{Our code is publicly available at \url{https://github.com/ML-KULeuven/catlog}.}
Furthermore, all gradient estimators were JIT compiled.
The different methods were benchmarked using discrete GPUs and every method had access to the same computational resources. A detailed account of the experimental setup, including hyperparameters and hardware, is given in the appendix.

In the analysis of the experiments we are interested in four main quantities: 1) the time it takes to perform the gradient estimation, 2) the number of drawn samples, 3) the number of function evaluations and 4) the value of a performance metric such as the ELBO. 
The number of function calls matters in as far that they might be expensive. Note, however, that evaluating the function $f$ is comparatively cheap and trivially parallelisable in the standard benchmarks.

\subsection{Synthetic: Exact Gradient Comparison}
\label{sec:synthetic_comp_exact}

For small enough problems the exact gradient for multivariate categorical random variables can be computed via explicit enumeration. 
Inspired by~\citet{niepert2021implicit}, we compare the estimates of 
\begin{align}
    \partial_\theta \Exp{  \mathbf{X} \sim p(\mathbf{X})}
    { \sum_{d=1}^D   \left|  X_d - b_d \right|}
\end{align}
to its exact value.
Here, $\theta$ are the logits that directly parametrise a categorical distribution $p(\mathbf{X}) = \prod_{d=1}^D p(X_d) $ and $b_d$ denotes an arbitrarily chosen element of $\Omega(X_d)$.
We compare the gradient estimates from \indecater, \reinforce, RLOO, and Gumbel-Softmax (GS) by varying the number of distributions $D$ and their cardinality.

In Figure~\ref{fig:synthetic_grads} we show the empirical bias and variance for the different estimators. 
Each estimator was given $1000$ samples, while \indecater was only given a single one. 
Hence, \indecater has the fewest function evaluations as $D\cdot K$ is smaller than 1000 for each configuration.
\indecater offers gradient estimates close to the exact ones with orders of magnitude lower variance for all three settings.
RLOO exhibits the smallest difference in bias, yet it can not compete in terms of variance. 
Furthermore, the computation times were of the same order of magnitude for all methods. 
This is in stark contrast to the estimator presented by \citet{tokui2017evaluating}, where a two-fold increase in computation time of RAM with respect to \reinforce is reported.

\begin{figure}[t]
     \centering
     \begin{subfigure}[b]{0.30\textwidth}
         \centering
         \includegraphics[width=\textwidth]{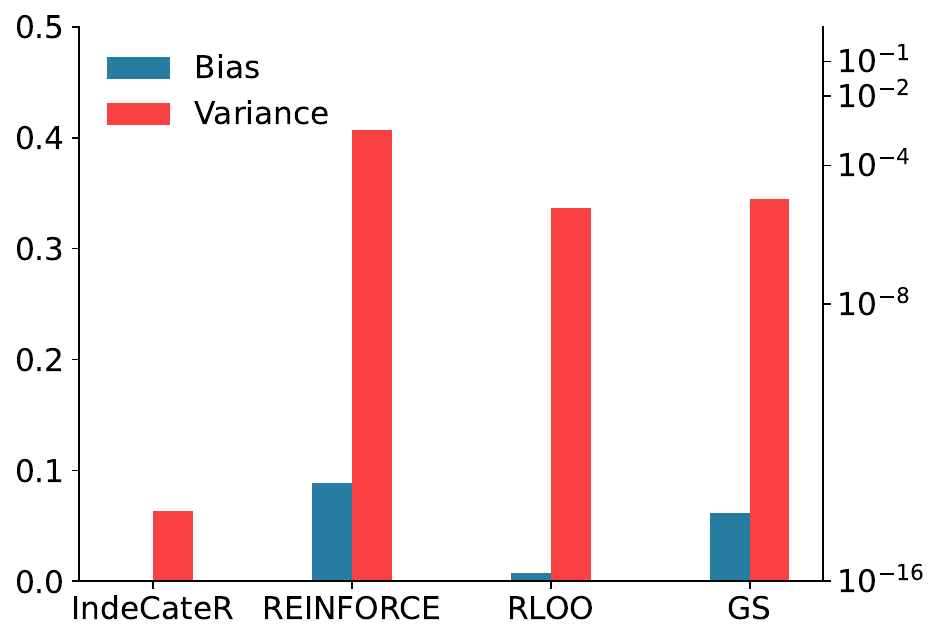}
         \caption{$D=12$ and $\abs{\Omega(X_d)}=3$ }
         \label{fig:synthetic_grads_d12_c3}
     \end{subfigure}
     \hfill
     \begin{subfigure}[b]{0.30\textwidth}
         \centering
         \includegraphics[width=\textwidth]{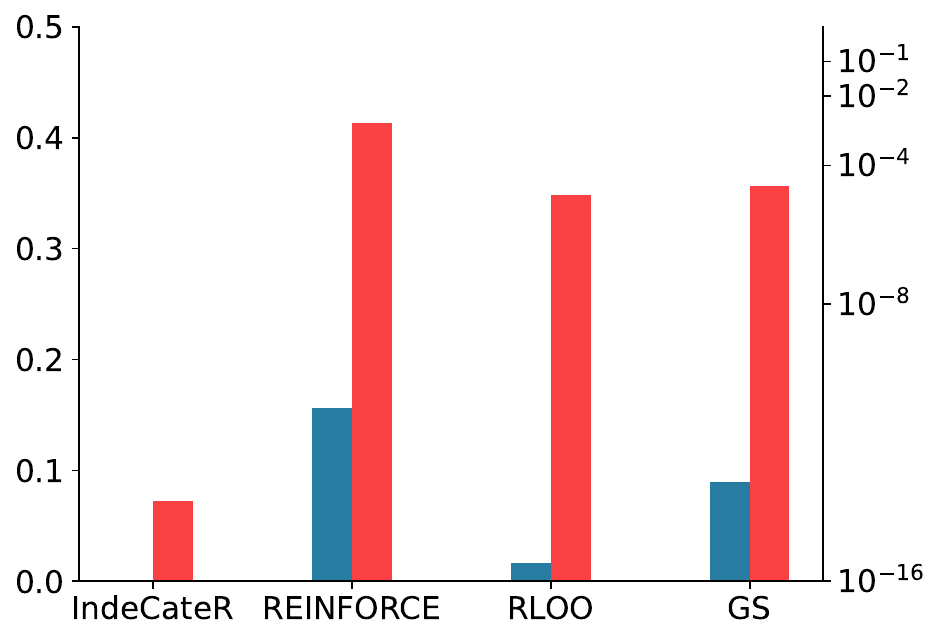}
         \caption{$D=6$ and $\abs{\Omega(X_d)}=10$}
         \label{fig:synthetic_grads_d6_c10}
     \end{subfigure}
     \hfill
     \begin{subfigure}[b]{0.30\textwidth}
         \centering
         \includegraphics[width=\textwidth]{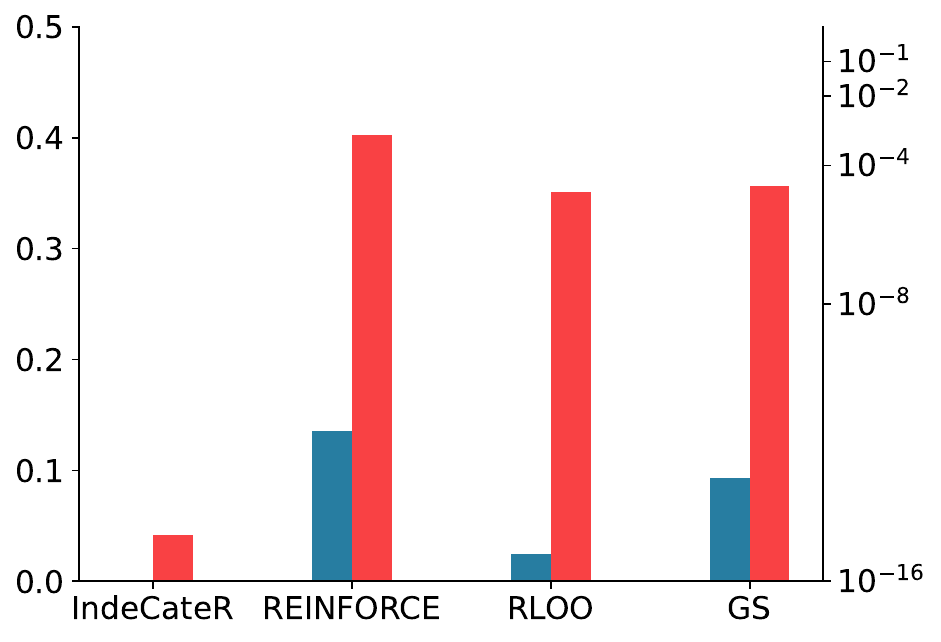}
         \caption{$D=3$ and $\abs{\Omega(X_d)}=100$}
         \label{fig:synthetic_grads_d3_c100}
     \end{subfigure}
    \caption{We report the empirical bias and variance for the different estimators and distributions in comparison to the exact gradient. The left y-axis in each plot indicates the bias while the right one shows the variance. Bias and variance results are averages taken over 1000 runs.}
    \label{fig:synthetic_grads}
\end{figure}

\subsection{Synthetic: Optimisation}
\label{sec:synthetic_optimisation}

We now study an optimisation setting \cite{titsias2022double}, where the goal is to maximise the expected value
\begin{align}
    \Exp{\mathbf{X} \sim p(\mathbf{X})}{\frac{1}{D}\sum_{i=1}^D(X_i - 0.499)^2},
\end{align}
and $p(\mathbf{X})$ factorizes into $D$ independent binary random variables.
The true maximum is given by $p(X_d = 1) = 1$ for all $d$. This task is challenging because of the small impact of the individual values for each $X_d$ on the expected value for higher values of $D$. We set $D = 200$ and report the results in Figure~\ref{fig:synthetic_optimisation}, where we compare \indecater to RLOO and Gumbel-SoftMax.

In Figure~\ref{fig:synthetic_optimisation} and subsequent figures we use the notation RLOO-F and RLOO-S, which we define as follows.
If \indecater takes $N$ samples, then it performs $D \cdot K \cdot N$ function evaluations with $K= \max_d\abs{\Omega(X_d)}$.
As such, we define RLOO-S as drawing the same number of samples as \indecater, which translates to $N$ function evaluations. 
For RLOO-F we match the number of function evaluations, which means that it takes $D \cdot K \cdot N$ samples. 
We give an analogous meaning to GS-S and GS-F for the Gumbel-SoftMax gradient estimator. 

\begin{figure}[t]
    \centering
    \includegraphics[width=\linewidth]{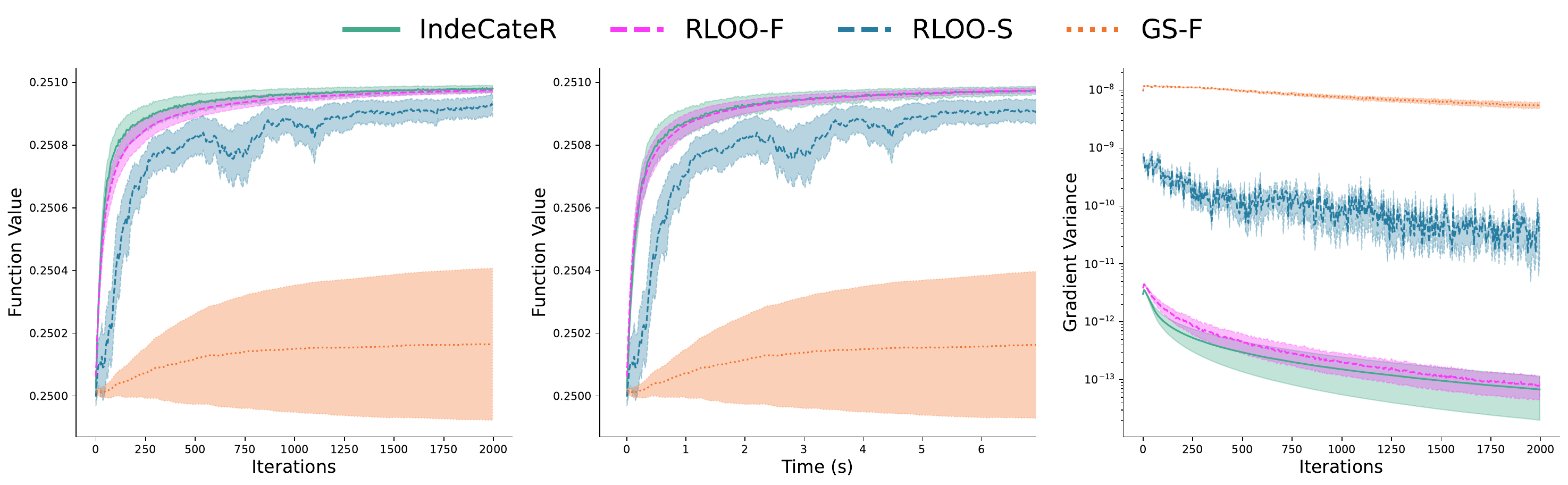}
    \caption{We plot the function value for different estimators against iterations (left) and time (middle). On the right, we plot the variance of the gradients against iterations. Statistics were obtained by taking the average and standard error over 10 runs. \indecater and RLOO-S both use 2 samples, while RLOO-F and Gumbel-Softmax (GS) use $800$ samples. The number of function evaluations is equal for \indecater, RLOO-F, and GS-F. We performed a hyperparameter search for the learning rate and the temperature of GS-F. Parameters were optimised using RMSProp.}
    \label{fig:synthetic_optimisation}
\end{figure}

\indecater distinguishes itself by having both the lowest variance and quickest convergence across all methods, even compared to RLOO-F. Additionally, the time to compute all gradient estimates does not differ significantly for the different methods and leads to the same conclusions. It is striking to see that the Gumbel-Softmax estimator struggles in this task, which is likely due to its bias in combination with the insensitive loss function.

\subsection{Discrete Variational Auto-Encoder}\label{sec:dvae_exp}

As a third experiment we analyse the ELBO optimisation behaviour of a discrete variational auto-encoder (DVAE)~\cite{rolfe2016discrete}.
We optimise the DVAE on the three main datasets from the literature, being MNIST~\cite{lecun1998mnist}, F-MNIST~\cite{xiao2017fashion} and Omniglot~\cite{lake2015human}. The encoder component of the network has two dense hidden layers of sizes 384 and 256 ending in a latent 200-dimensional Bernoulli variable. The decoder takes samples from this variable followed by layers of size 256, 384 and 784.
\indecater again uses two samples, hence we can compare to the same configurations of RLOO and Gumbel-Softmax as in Section~\ref{sec:synthetic_optimisation}. That is, equal samples (GS-S and RLOO-S) and equal function evaluations (GS-F and RLOO-F).

As evaluation metrics, we show the negated training and test set ELBO in combination with the variance of the gradients throughout training.
We opted to report all metrics in terms of computation time (Figure~\ref{fig:vae_resulst}), but similar results in terms of iterations are given in the appendix.

A first observation is that \indecater performs remarkably well in terms of convergence speed as it beats all other methods on all datasets in terms of training ELBO.
However, we can observe a disadvantage of the quick convergence in terms of generalisation performance when looking at the test set ELBO.
RLOO-F and \indecater both exhibit overfitting on the training data for MNIST and F-MNIST, resulting in an overall higher negative test set ELBO compared to the other methods.
We speculate that the relaxation for the Gumbel-Softmax or the higher variance~\cite{wu2020noisy} of RLOO-S act as a regulariser for the network.
As one would ideally like to separate discrete sampling layers from network regularisation, we see this overfitting as a feature and not a drawback of \indecater.

\indecater additionally is competitive in terms of gradient variance, especially considering the number of samples that are drawn.
Across all datasets, the gradient variance of \indecater is comparable to that of RLOO-F, even though \indecater takes only 2 samples compared to the 800 of RLOO-F.
The only method with consistently lower gradient variance is the Gumbel-Softmax trick, but only for equal function evaluations. However, in that case, the Gumbel-Softmax trick takes $\approx 1.6$ times longer than \indecater to compute its gradient estimates.

\begin{figure}
    \centering
    \includegraphics[width=\linewidth]{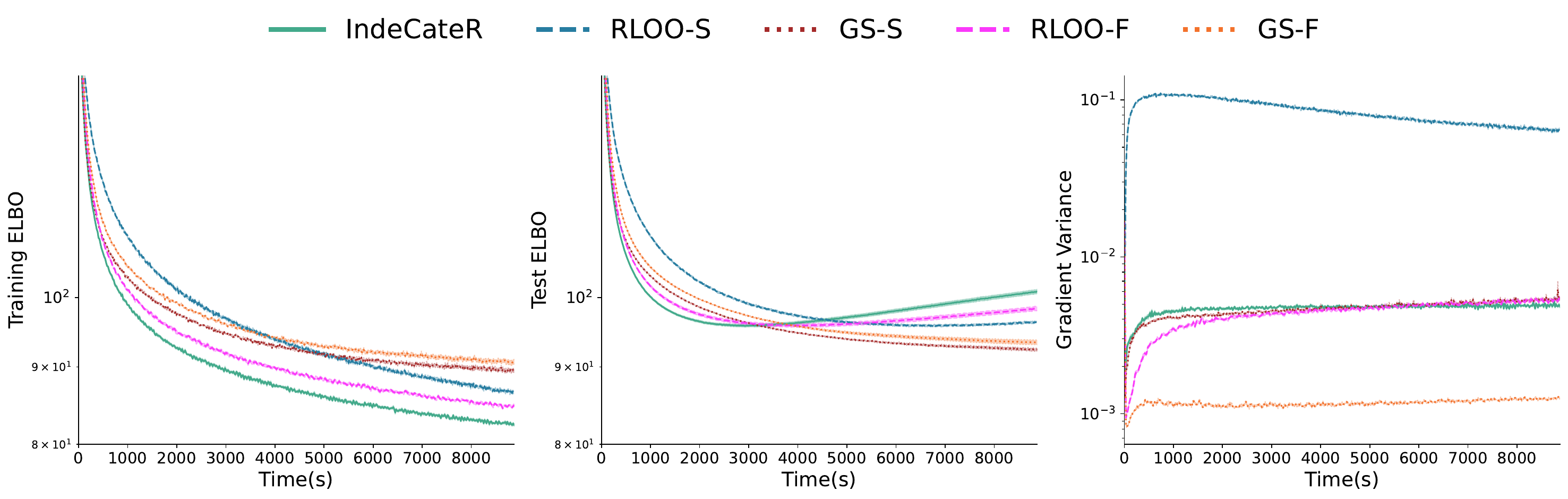}
    \includegraphics[width=\linewidth]{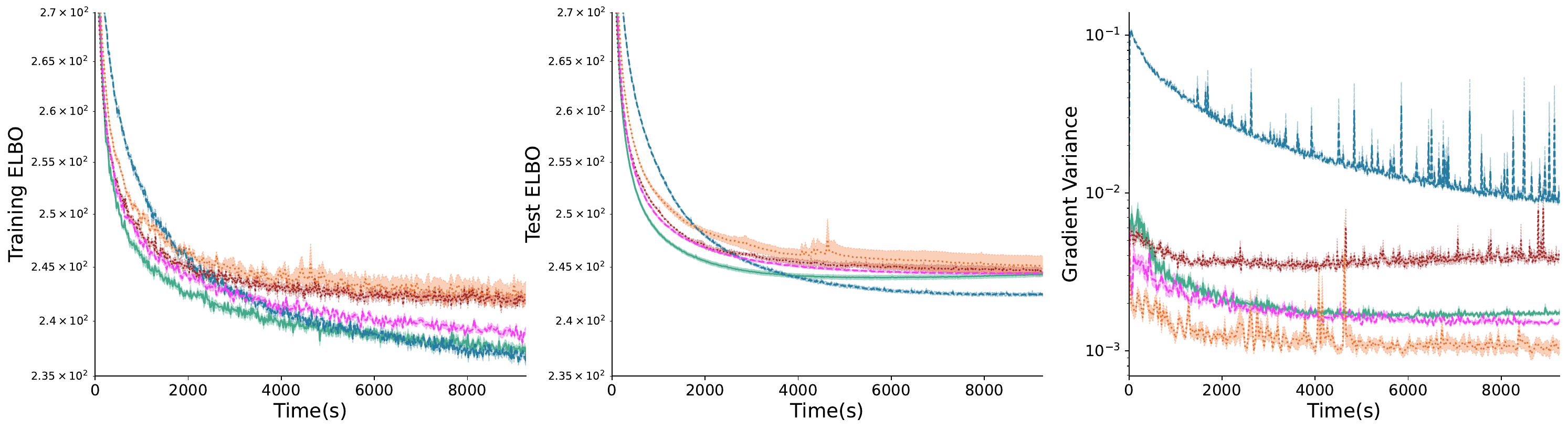}
    \includegraphics[width=\linewidth]{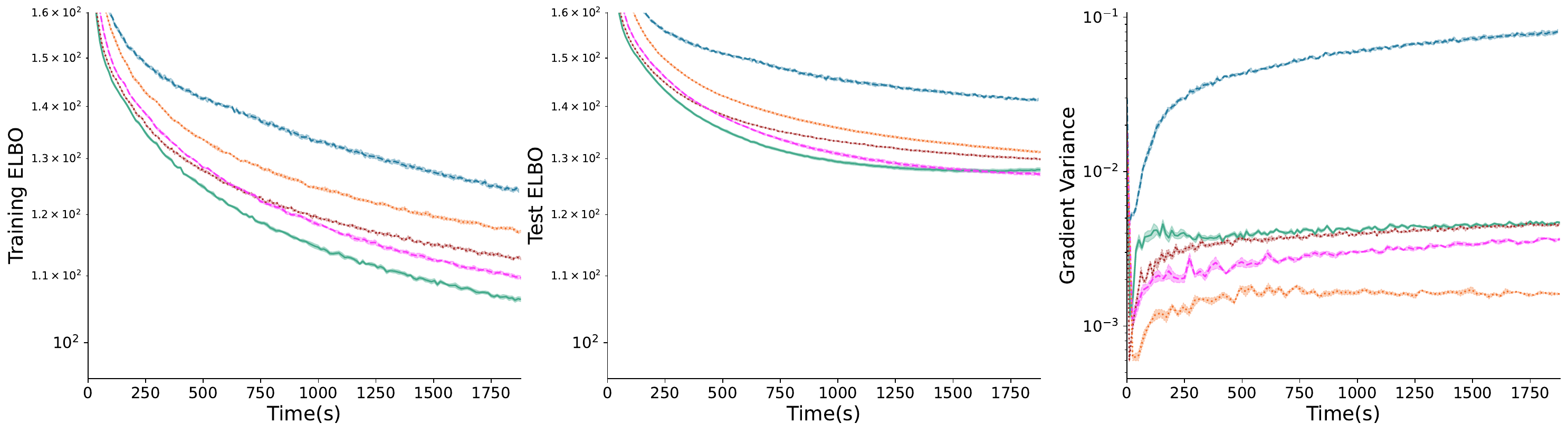}
    \caption{The top row shows negated training ELBO, negated test ELBO and gradient variance for the DVAE on MNIST, each plotted against time. Rows 2 and 3 show the same plots for F-MNIST (middle) and Omniglot (bottom).}
    \label{fig:vae_resulst}
\end{figure}

\subsection{Neural-Symbolic Optimisation}
\label{sec:nesy}

A standard experiment in the neural-symbolic literature is the addition of MNIST digits~\cite{manhaeve2018deepproblog}. We will examine an alternative version of this experiment, that is, given a set of $D$ MNIST digits, predict the sum of those digits. During learning, the only supervision provided is the sum and not any direct label of the digits.
The difficulty of the problem scales exponentially, as there are $10^D$ states in the sample space. Note that there are only $10D+1$ possible labels, resulting in only very sparse supervision.

In the field of neural-symbolic AI such problems are either solved exactly~\cite{manhaeve2018deepproblog} or by simplifying the underlying combinatorial structure~\cite{huang2021scallop,manhaeve2021approximate}.
Exact methods scale very poorly while simplifying the combinatorics introduces problematic biases.
In contrast, we will study neural-symbolic inference and learning using sampling and unbiased gradient estimators. The overall architecture is as follows, imitating inference in a general neural-symbolic system. Each of the D different MNIST images is passed through a neural classifier, which gives probabilities for each class. These probabilities are used to sample a number between $0$ and $9$ for each image. The numbers are summed up and compared to the label using a binary cross-entropy loss, as logical supervision is either true or false.

The hard part of this problem is the sparse supervision. In neural-symbolic AI, this sparsity is resolved by relying on classic search algorithms. However, these are not amenable to be run on parallel computing hardware, such as GPUs. In turn, training has to either occur entirely on the CPU or a hefty time penalty has to be paid for transferring data from and to the GPU.

Using \indecater in a neural-symbolic setting we achieve two things. 
On the one hand, we use sampling as a stochastic yet unbiased search, replacing the usual symbolic search.
On the other, we render this stochastic search differentiable by estimating gradients instead of performing the costly exact computation.
To make this work, we exploit a feature of \indecater that we have ignored so far.

In Equation~\eqref{eq:indecator} we can draw different samples for each of the $D$ different terms in the sum $\sum_{d=1}^D$ corresponding to the $D$ different variables.
While drawing new samples for every variable does increase the total number of samples, the number of function evaluations remains identical.
We indicate the difference by explicitly mentioning which version of \indecater draws new samples per variable.
Note that this modification of the estimator would not have been possible following the work of \citet{tokui2017evaluating} or \citet{titsias2015local}.

In Figure~\ref{fig:mnist_addition_accuracy} we show the empirical results.
We let \indecater without sampling per variable draw $10$ samples such that \indecater with sampling per variables draws $D \cdot 10$ samples. The number of function evaluations of both methods stays the same at $D \cdot 10 \cdot 10$. 
In comparison, RLOO-F uses $D \cdot 10 \cdot 10$ samples and function evaluations. 
We see that \indecater is the only method capable of scaling and consistently solving the MNIST addition for $16$ digits when taking new samples for every variable.
In particular, these results show how the \catlogtrick estimates can significantly outperform the LEG and RAM estimates by drawing different samples for different variables, even in the independent case.

\begin{figure}[tb]
    \centering
    \includegraphics[width=\linewidth]{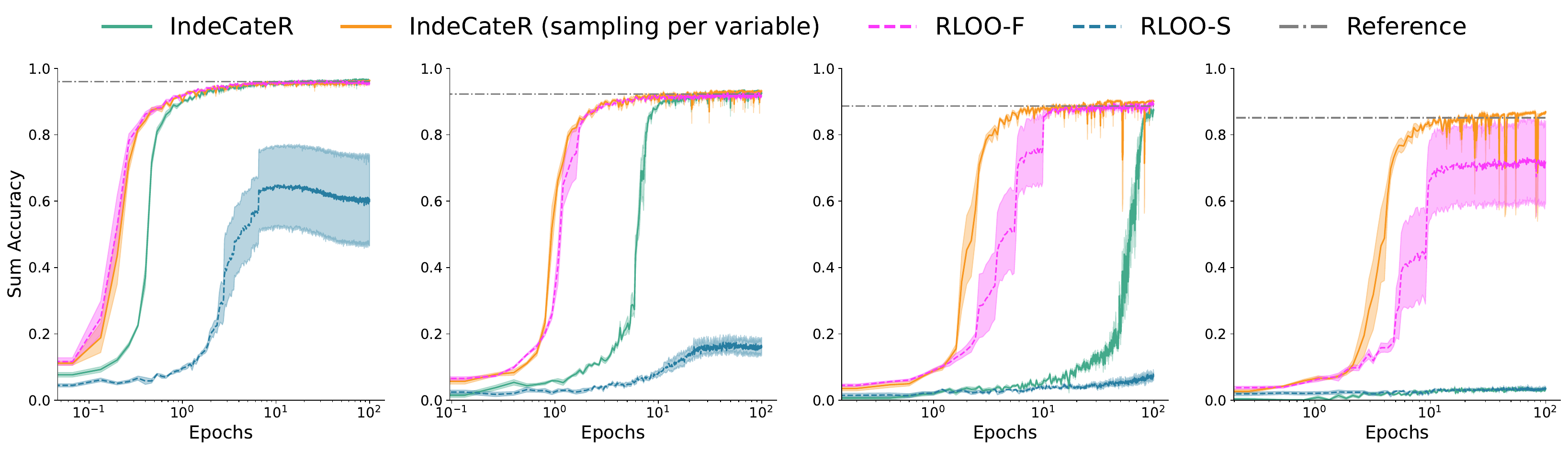}
    \caption{
    Test set accuracy of predicting the correct sum value versus number of epochs for the MNIST addition. 
    From left to right, the plots show curves for the different estimators for $4$, $8$, $12$, and $16$ MNIST digits.
    We compare \indecater with and without sampling per variables to RLOO-F and RLOO-S.
    As the function $f$ has a zero derivative almost everywhere in this case, we can not compare to the Gumbel-Softmax trick.
    The dashed, grey line at the top of each plot represents a hypothetical sum-classifier that has access to a $99\%$ accurate MNIST digit classifier.
    The $x$-axis is in log-scale.}
    \label{fig:mnist_addition_accuracy}
\end{figure}

\section{Conclusion}

We derived a Rao-Blackwellisation scheme for the case of gradient estimation with multivariate categorical random variables using the \catlogtrick. This scheme led straightforwardly to a practical gradient estimator for independent categorical random variables -- the \indecater estimator. 
Our experimental evaluation showed that \indecater produces faithful gradient estimates, especially when the number of samples is the main concern, and can be run efficiently on modern GPUs. Furthermore, \indecater constitutes the only hyperparameter-free, low-variance estimator for (conditionally) independent categorical random variables.

Three main axes can be identified for future work.
First and foremost, we aim to analyse the more general case of a non-independently factorising probability distribution by applying \scater to more structured problems, such as differentiable planning and reinforcement learning.
Secondly, we plan to further study both \indecater and \scater in the neural-symbolic setting and develop estimators for multivariate discrete-continuous distributions containing symbolic structures~\cite{de2023neural}.
Finally, the combination of control variates with the \catlogtrick seems to be fertile ground for further improvements in variance reduction. Considering the Gumbel-Softmax Trick generally still exhibits lower variance for an equal amount of function evaluations, such a combination could put \reinforce-based methods on equal footing with those based on the Gumbel-Softmax Trick.

\section*{Acknowledgements}
This research received funding from the Flemish Government (AI Research Program), from  the Flanders Research Foundation (FWO) under project G097720N and under EOS project No. 30992574, from the KU Leuven Research Fund (C14/18/062) and TAILOR, 
a project from the EU Horizon 2020 research and innovation programme under GA No. 952215.
It is also supported by the Wallenberg AI, Autonomous Systems and Software Program (WASP) funded by the Knut and Alice Wallenberg- Foundation. 


\pagestyle{plain}

\bibliography{ref}

\newpage

\appendix

\section{Experimental Details}

We will give more details about our experimental setup in this section. We used two machines for all experiments. The first one has access to 4x RTX 3080 Ti coupled with an Intel Xeon Gold 6230R CPU @ 2.10GHz and 256 GB of RAM. The second one has 4x GTX 1080 Ti with an Intel Xeon CPU E5-2630 v4 @ 2.20GHz and 128 GB of RAM. All synthetic experiments and the MNIST and F-MNIST datasets for the DVAE experiment were run on the former, while Omniglot and the neural-symbolic experiment used the latter machine. Note that every single run of each experiment only used a single GPU.

\subsection{Synthetic Experiments}

\paragraph{Modelling.}
Both synthetic experiments directly modelled their respective distributions via parametrised logits. That is, the parameters are given by a $D\times K$ matrix of logits, where $D$ and $K$ are the number of independent components and the dimension of each individual variable, respectively. 

\paragraph{Hyperparameters.}
We optimised the temperature value for the Gumbel-Softmax for both synthetic experiments and the learning rate for all methods for the second one via a grid search over $\left\{10, 1, 0.1, 0.01\right\}$.
This yielded a temperature value of 1 for the first synthetic experiment and of 0.1 for the second one. In addition to this choice of initial temperature, a standard exponential annealing scheme~\cite{titsias2022double} was also used during the synthetic optimisation experiment, multiplying the initial temperature value with a factor $e^{-0.05}$ every 20 iterations. Moreover, the Gumbel-Softmax gradients also required removal of diverging values to maintain their usability during optimisation.
Finally, the learning rates for all methods were chosen from the values $\left\{10, 5, 1, 0.1, 0.01\right\}$ while using the RMSProp optimiser and ended up being different for almost all methods. Both RLOO-F and \indecater managed to perform best for a higher learning rate of $5$ while RLOO-S was stable and performed best for a learning rate of $1$. Gumbel-Softmax needed the smallest learning rate of $0.01$ as it would otherwise exhibit extremely unstable behaviour, likely due to the sensitivity of the loss function in conjunction with its bias.
The choice The encoder architecture indeed has 2 true hidden layers and one output layer.
The choice of two samples for \indecater was made as it is the lowest number of samples that allows a sample-equivalent to RLOO as it is a multi-sample estimator.

\subsection{Discrete Variational Auto-Encoder}

\paragraph{Datasets.}
Three datasets were used in this experiment with different forms of pre-processing. First, the usual MNIST~\cite{lecun1998mnist} dataset was binarised, as per usual in the discrete gradient estimation literature~\cite{niepert2021implicit, titsias2022double}. Binarised means that all pixels are normalised to values between $0$ and $1$ by dividing by $255$ followed by replacing all values strictly higher than $0.5$ with one and those lower with $0$.
For F-MNIST~\cite{xiao2017fashion}, we kept the continuous form of the data, only normalising the pixels to values between $0$ and $1$. In this way, we also had a dataset with a continuous supervision signal.
The last dataset, Omniglot~\cite{lake2015human}, was both downscaled and binarised. Every Omniglot image usually has dimensions $105\times 105\times 3$, which we reduced to greyscale $28\times 28$ followed by a binarisation step as outlined earlier. All test sets are constructed in the same way from the dedicated test partitions of the respective datasets.

\paragraph{Modelling.}
All datasets used the exact same neural architecture for the discrete variational auto-encoder (DVAE). As the data all had the same input size of $28\times 28$, the first layer flattened the input to size $784$ followed by three dense layers of sizes $384$, $256$ and $200$. The first two of these layers used the ReLU activation function while the last one had no non-linearity.
The output of this final $200$-dimensional layer predicted the logits of a $200$-dimensional Bernoulli variable, of which samples are drawn.
These samples then formed the input to the decoder component of the DVAE, consisting of another three dense layers of size $256$, $384$ and $784$. Again, the first two layers utilised ReLU activation functions while the final layer had a linear activation.
Finally, following the literature, the output of the decoder is interpreted as the logits for 784 binary random variables and optimised using an ELBO loss function, which is an expected value. The correct probabilities are given by the normalised and binarised pixel values of the original image.

\paragraph{Hyperparameters.}
As in the synthetic experiment, the Gumbel-Softmax required the most optimisation. The initial temperature value was set at $1$ and annealed using an exponential decay with a factor of $e^{-0.01}$ applied at the start of every epoch. To guarantee stability, it was necessary to both limit the temperature to $0.1$ in conjunction with removing diverging values from the gradients. This effort emphasises how the Gumbel-Softmax requires a lot of additional care to make it functional in practice compared to methods like RLOO or \indecater.
Apart from the optimisation of the Gumbel-Softmax, we tried a learning rate of both $10^{-3}$ and $10^{-4}$ for all methods and ended up picking the latter because it generally provided lower final negated ELBO values due to its slower convergence. All methods used the Adam~\cite{kingma2015adam} optimiser.
Note that all networks were initialised using the same random scheme and that no weight regularisation was applied to avoid obscuring the impact of using different estimators.

\paragraph{Additional plots and interpretations.}
We report additional results in Figure~\ref{fig:vae_results_its} for the DVAE, where we plot all metrics in terms of iterations.
Using these figures, we can see that \indecater is very close to the performance of RLOO-F, but with orders of magnitude less samples ($2$ for \indecater compared to $800$ for RLOO-F). This reduction in samples is also what makes \indecater perform better in time.
The same overfitting behaviour can now also be seen on Omniglot for both \indecater and RLOO-F. However, in this case, the lowest negated test ELBO for both methods is lower than all competitors. This indicates that adding separate regularisation might further improve performance, as this regularisation is implicitly present in GS-S, GS-F and RLOO-S through either the continuous relaxations or higher variance.

\begin{figure}[bt]
    \centering
    \includegraphics[width=\linewidth]{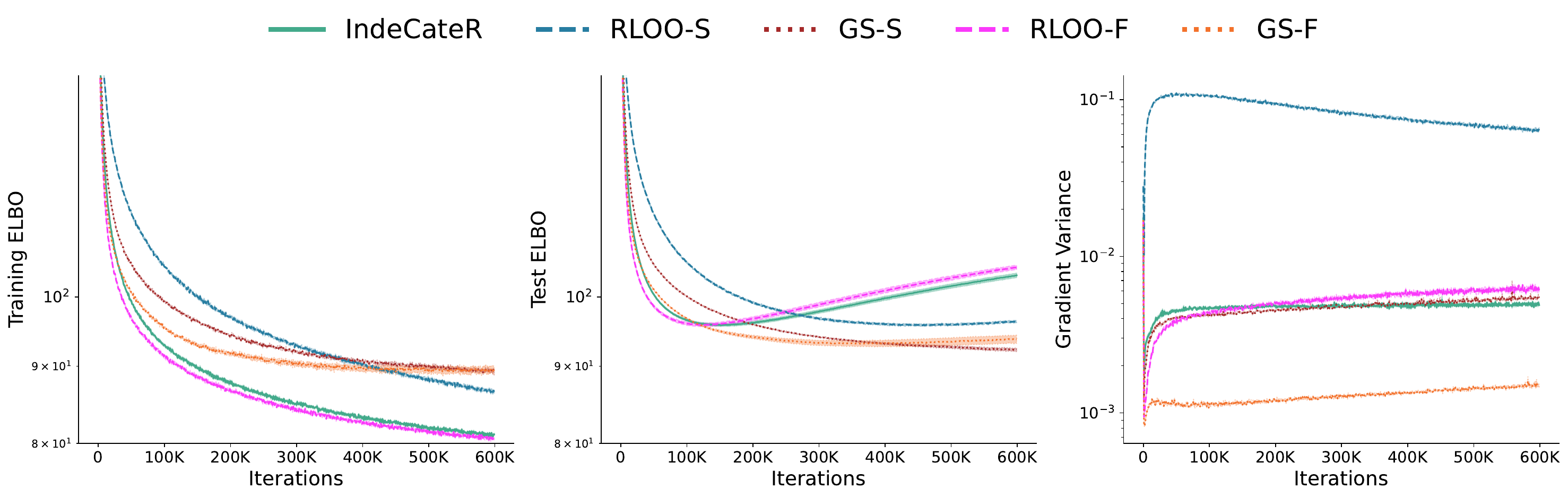}
    \includegraphics[width=\linewidth]{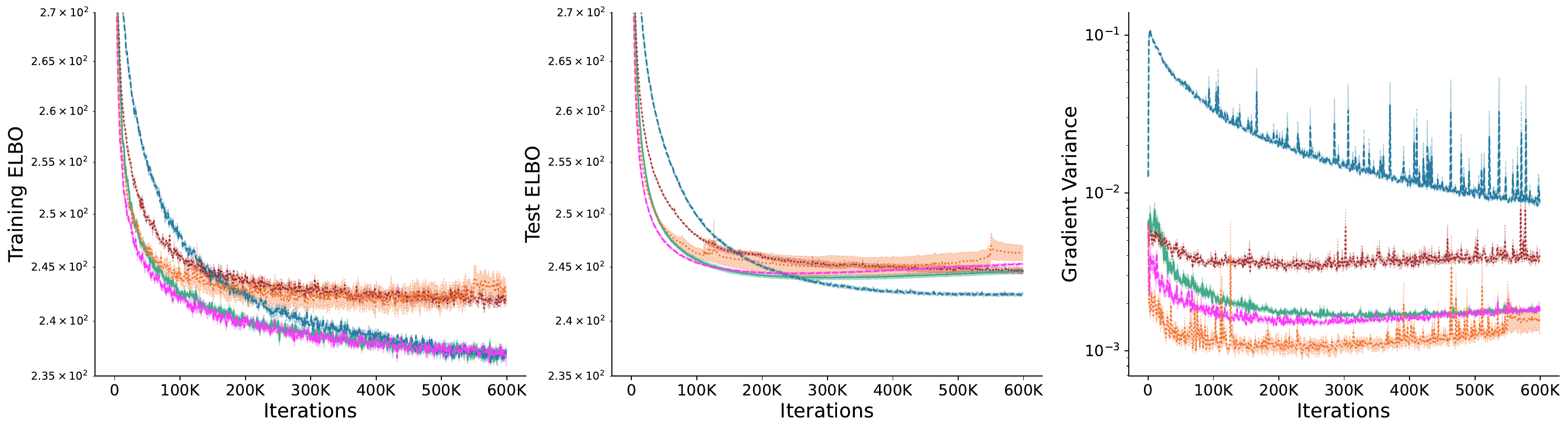}
    \includegraphics[width=\linewidth]{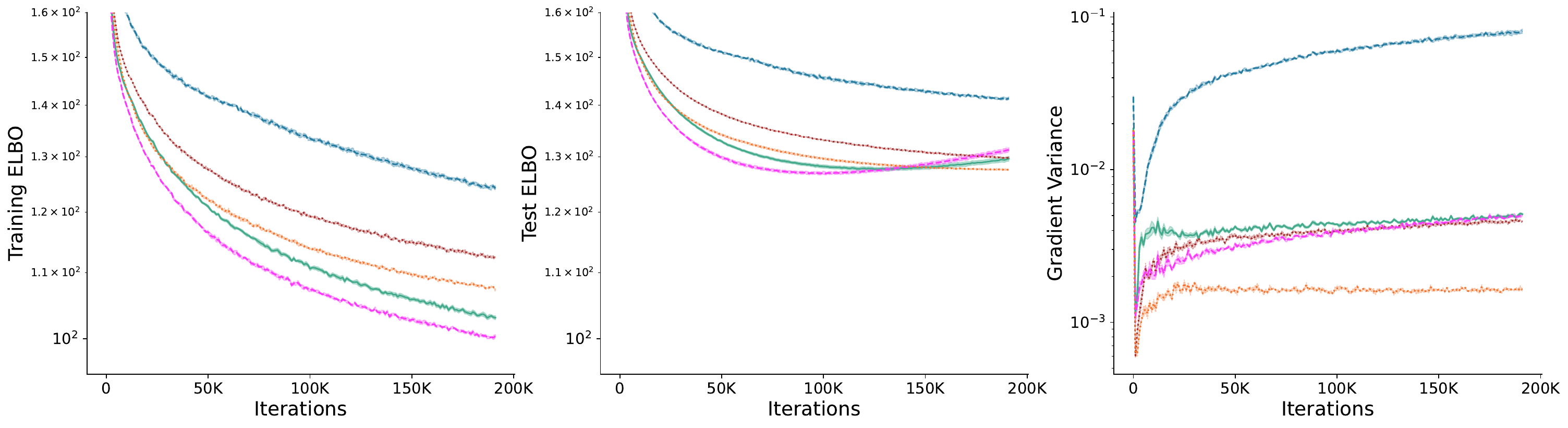}
    \caption{The top row shows negated training ELBO, test ELBO and gradient variance for the DVAE on MNIST -- each plotted against iterations. Rows 2 and 3 show the same plots for F-MNIST (middle) and Omniglot (bottom). Omniglot only shows around 200K iterations because the dataset is around one third the size of MNIST and F-MNIST, and we fix the number of epochs to 1000.}
    \label{fig:vae_results_its}
\end{figure}


        

\subsection{Neural-Symbolic Optimisation}

\paragraph{Dataset. }
The data for performing the MNIST addition experiment~\cite{manhaeve2018deepproblog} on sequences of $N$ digits is easily generated by randomly selecting $N$ images from the MNIST dataset and concatenating them into a sequence. The supervision of such a sequence is equally easily obtained by summing up the labels of the sampled MNIST images. We only allow every MNIST image to appear once in all sequences, meaning the dataset contains $\lfloor60000 / N\rfloor$ sequences to learn from. A similar procedure holds for the test set, whose sequences we construct from the test set partition of the MNIST dataset.

\paragraph{Modelling. } The neural network used for this experiment is a traditional LeNet~\cite{lecun1998gradient} consisting of two convolutional layers with 6 and 16 filters of size $5$ with ReLu activations followed by a flattening operation. Next, there are three dense layers of sizes $120$, $84$ and $10$ of which the first two also have ReLU activations and the final one activates linearly. The output of this network for every image in the sequence yields the logits for the independent categorical distributions from which we sample. These samples are summed up and supervised to the correct sum. That is, a sampled sum is supervised to 1 if it matches the correct sum or 0 otherwise. More specifically, we optimise the negative log likelihood
\begin{align}
    -\log P(\sum_{i}^N D_i = s)
    &=
    -\log 
    \Exp{\boldsymbol{D}\sim p(\boldsymbol{D})}
    {\mathbbm{1}_{\sum_{i}^N D_i = s}}.
\end{align}

\paragraph{Hyperparameters. }
We chose the standard Adam optimiser with a learning rate of $10^{-3}$ because of its dependable performance. The choice of 10 samples for \indecater was made so that every possible outcome of each digit is expected to appear once when sampling with uniform probabilities. The number of samples for the other methods is then fixed.

\subsection{Brittleness of Gumbel-Softmax Estimates}

During our experimental process, a number of noteworthy difficulties arose when applying the Gumbel-Softmax trick (GS).
These difficulties show how applying the GS is far from plug-and-play and requires a substantial amount of hyperparameter tuning, at least to achieve performance comparable to other methods like RLOO.

\paragraph{Temperature tuning.}
While the bias of the GS can theoretically be controlled by annealing the temperature to zero, this process turns out to be rather cumbersome in practice. For example, in the DVAE experiments, we had to fix a lowest value for the temperature value beyond which it could not be annealed. If not, the ever smaller temperature would lead to numerical instability in the form of diverging gradients.
Even when lower limiting the temperature, sporadic divergences of the gradient still needed to be clipped or they would instantly disrupt the full training process. In other words, finding a well-performing annealing scheme is non-trivial and does not guarantee a problem-free training process.

\paragraph{Bias and insensitive objectives.}
As observed during the synthetic optimisation experiment, the loss function itself can seemingly have a significant impact on the quality of GS gradient estimates. 
This experiment used a regular mean squared error loss function, which does not have any non-differentiabilities in its domain. However, the value of the loss function itself does not change much for different sampled values. Since the GS estimates are biased, we conjecture that there are cases where the bias overpowers the information from the loss function. In theory, annealing the temperature should eventually make the bias a non-issue, but the numerical issues raised in the previous paragraph prevented us from acquiring a satisfactory solution.

\section{Extended Related Work}\label{app:extendedrw}

There are a plethora of works in the literature of gradient estimators, for an in-depth discussion we refer the reader to the survey in~\citet{mohamed2020monte}. This section will extend the related work given in the main paper.

\paragraph{Continuous relaxations.}
Diametrically opposite to \reinforce-based methods are those based on continuous relaxations of categorical distributions. The concrete distribution \cite{maddison2017concrete} is a continuous relaxation of the categorical distribution using the Gumbel-Softmax trick \cite{jang2017categorical}. Compared to variance reduction techniques based on control variates for \reinforce, the Gumbel-Softmax trick has the advantage of being a single-sample estimator. However, it has other drawbacks. Firstly, it gives us a biased estimate of the gradients. Secondly, it necessitates the tuning of the temperature hyperparameter -- either through hyperparameter search or a specific annealing schedule \cite{jang2017categorical,maddison2017concrete}. Thirdly, when the derivative of the function inside the expectation is zero almost everywhere, such as the reward in reinforcement learning, the Gumbel-Softmax trick always returns zero gradients. Even though techniques have been developed to improve upon some of these shortcomings, such as Rao-Blackwellising the concrete distribution~\cite{paulus2021rao}, the obtained improvements seem marginal.
For a more detailed account of the concrete distribution and related methods we refer the reader to \citet{huijben2022review}.

\paragraph{Coupling multiple samples.}
Coupling approaches make better use of multiple samples by making them dependent. 
Initially, these approaches were applied to the case of a multivariate binary distribution~\cite{dimitriev2021arms} by, for example, using a logistic reparametrisation together with antithetic sampling~\cite{yin2019arm, dong2020disarm, yin2020probabilistic}.
Categorical extensions of these binary cases have also been proposed. One line of work~\cite{dimitriev2021carms} extends the binary version by representing a categorical distribution as a series of binary ones. Another~\cite{yin2019arsm} utilises a Dirichlet reparametrisation together with different kinds of coupled sampling.
Our Cat-Log Derivative trick is also orthogonal to coupling methods, as it currently only considers independent samples.

\paragraph{Control variate methods.}
While some control variate techniques were mentioned in the main paper, a couple of more intricate contributions need to be mentioned.
RELAX~\citep{grathwohl2017backpropagation} and REBAR~\citep{tucker2017rebar} are hybrid methods that use continuous relaxations to construct a control variate for REINFORCE. The work of~\citet{titsias2022double} introduces double control variates; a global one for all samples and a local one for each sample separately. 
Our Cat-Log Derivative trick is orthogonal to control variate approaches in general and therefore both strategies could benefit from each other.

\paragraph{Rao-Blackwellisation.}
Apart from the RAM and LEG gradient estimators, there are two more lines of work that apply Rao-Blackwellisation.
The works of~\citet{lorberbom2019direct,lorberbom2020direct} use a finite difference approximation to compute the gradient. Other approaches~\cite{liu2019rao,liang2018memory} globally sum categories while sampling from the remainder.
In contrast, our approach conditions the sampling at the level of the variables and can be therefore again be regarded as a complementary strategy.

\section{Comparing CatLog and Local Expectation Gradients}
\label{app:leg}

Local Expectation Gradients (LEG)~\cite{titsias2015local} and the \catlogtrick both try to decompose the overall gradient of an expectation by exploiting distributional knowledge. However, there are a couple of fundamental theoretical differences, which we collect in the following statement.

\begin{proposition}
    \label{prop:leg}
    (LEG Trick and Estimator) Given a factorised distribution $p(\mathbf{X})=\prod_{d=1}^Dp(X_d \mid \mathbf{X}_{<d})$, LEG with respect to a shared parameter $\lambda \in \Lambda$ leads to the expression
    \begin{align}
        &\Exp{ \mathbf{X} \sim p(\mathbf{X}) }{f(\mathbf{X})\partial_\lambda \log p(\mathbf{X})} \nonumber \\
        &= \sum_{d=1}^D \Exp{\mathbf{X}_{\neq d} \sim p(\mathbf{X}_{\neq d})}{\Exp{X_d \sim {p(X_d\mid\text{MB}(X_d))}}{ f(\mathbf{X})\partial_\lambda \log p(X_d\mid\mathbf{X}_{<d}) }} \tag{cf. Eq.~8 in~\cite{titsias2015local}} \nonumber \\
        &=
        \sum_{d=1}^D
        \E_{
            \mathbf{X}_{<d}\sim p(\mathbf{X}_{<d})
        }
        \Bigg[
            \E_{
                X_d'\sim p(X_d'\mid\mathbf{X}_{<d})
            }
            \bigg[
                \nonumber
                \\
                &
                \qquad \quad
                \E_{
                    \mathbf{X}_{>d} \sim p(\mathbf{X}_{>d}\mid {X_d'},\mathbf{X}_{<d})
                }
                \Big[
                    \Exp{
                        X_d\sim {p(X_d\mid \mathbf{X}_{\neq d})}
                    }
                    {
                        f(\mathbf{X})\partial_\lambda \log p(X_d\mid\mathbf{X}_{<d})
                    }
                \Big]
            \bigg]
        \Bigg]
        \label{eq:leg_trick}
    \end{align}
    where $\text{MB}(X_d)$ is the Markov blanket of variable $X_d$ and
    \begin{align}
        p(X_d\mid \text{MB}(X_d))=p(X_d\mid \mathbf{X}_{\neq d})=\frac{p(X_d\mid\mathbf{X}_{<d})\prod_{j>d}p(X_j\mid \mathbf{X}_{<j})}{\sum_{x_\delta\in\Omega(X_d)}p(x_\delta\mid \mathbf{X}_{<d})\prod_{j>d}p(X_j\mid \mathbf{X}_{<j})}
    \end{align}
    is a weighting distribution.
    The Monte Carlo estimate corresponding to the LEG Trick in Equation~\eqref{eq:leg_trick} (cf. Algorithm 1 in~\cite{titsias2015local}) given $N$ pivot samples is given by the following expression:
    \begin{align}
        \Exp{ \mathbf{X} \sim p(\mathbf{X}) }{f(\mathbf{X})\partial_\lambda \log p(\mathbf{X})} & \approx \frac{1}{N}\sum_{n=1}^N\sum_{d=1}^D\sum_{x_\delta\in\Omega(X_d)}{p(x_\delta\mid\mathbf{x}_{\neq d}^{(n)})}f(\mathbf{x}_{\neq d}^{(n)},x_\delta)\partial_\lambda \log p(x_\delta\mid\mathbf{x}_{<d}^{(n)}) \nonumber
    \end{align}
    Additionally, the computational complexity is $\mathcal{O}(N \cdot D^2 \cdot K)$, with $K=\max_d(\mid\Omega(X_d)\mid)$.
\end{proposition}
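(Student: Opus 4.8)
The plan is to prove the four claims in sequence, reusing the factorisation step already performed in the proof of the \catlogtrick. I would start from the second term of the standard Log-Derivative trick and, exactly as in Equation~\eqref{eq:sum_logderivative}, pull the product out of the logarithm to obtain
\[
\Exp{\mathbf{X}\sim p(\mathbf{X})}{f(\mathbf{X})\partial_\lambda \log p(\mathbf{X})} = \sum_{d=1}^D \Exp{\mathbf{X}\sim p(\mathbf{X})}{f(\mathbf{X})\,\partial_\lambda \log p(X_d\mid\mathbf{X}_{<d})}.
\]
For each fixed $d$ I would then apply the law of total expectation by splitting the joint as $p(\mathbf{X}) = p(\mathbf{X}_{\neq d})\,p(X_d\mid\mathbf{X}_{\neq d})$ and invoking the Markov property $p(X_d\mid\mathbf{X}_{\neq d}) = p(X_d\mid\mathrm{MB}(X_d))$. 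This immediately yields the intermediate form marked ``cf. Eq.~8'', whose inner expectation over $X_d$ given its Markov blanket is precisely the Rao-Blackwellisation step that LEG performs.

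To reach the fully nested expression in Equation~\eqref{eq:leg_trick}, I would expand the outer expectation $\Exp{\mathbf{X}_{\neq d}\sim p(\mathbf{X}_{\neq d})}{\cdot}$ through ancestral sampling. Writing $p(\mathbf{X}_{\neq d}) = \sum_{x_d'\in\Omega(X_d)} p(\mathbf{X}_{<d})\,p(x_d'\mid\mathbf{X}_{<d})\,p(\mathbf{X}_{>d}\mid x_d',\mathbf{X}_{<d})$ shows that generating $\mathbf{X}_{\neq d}$ forces us to first draw $\mathbf{X}_{<d}$, then a \emph{pivot} sample $X_d'\sim p(X_d'\mid\mathbf{X}_{<d})$, and finally the downstream variables $\mathbf{X}_{>d}$ conditioned on that pivot. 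Substituting this three-level decomposition produces exactly the three outer expectations of Equation~\eqref{eq:leg_trick}, the essential point being that the marginalised $X_d$ of the innermost expectation is a genuinely different object from the pivot $X_d'$ used only to generate $\mathbf{X}_{>d}$.

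For the weighting-distribution formula I would apply Bayes' rule, $p(X_d\mid\mathbf{X}_{\neq d}) = p(\mathbf{X})/p(\mathbf{X}_{\neq d})$, and note that the prefix factors $\prod_{j<d}p(X_j\mid\mathbf{X}_{<j})$ are independent of $X_d$ and so cancel between numerator and the marginalised denominator, leaving only $p(X_d\mid\mathbf{X}_{<d})$ together with the downstream factors $\prod_{j>d}p(X_j\mid\mathbf{X}_{<j})$ in which $X_d$ enters through the conditioning sets. The Monte Carlo estimate then follows by replacing the three outer expectations with $N$ joint pivot samples $\mathbf{x}^{(n)}\sim p(\mathbf{X})$ and computing the innermost expectation over $X_d$ exactly as a finite sum over $\Omega(X_d)$ weighted by $p(x_\delta\mid\mathbf{x}_{\neq d}^{(n)})$.

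Finally, for the complexity bound I would count the nested sums: the factors $N$, $D$, and $K$ give $N\cdot D\cdot K$ terms, but each weight $p(x_\delta\mid\mathbf{x}_{\neq d}^{(n)})$ requires evaluating the product $\prod_{j>d}p(X_j\mid\mathbf{X}_{<j})$ over up to $D$ downstream factors, supplying the extra factor of $D$ and yielding $\mathcal{O}(N\cdot D^2\cdot K)$. The hard part will be the bookkeeping in the second paragraph: making rigorous that the pivot $X_d'$ and the marginalised $X_d$ occupy distinct roles, since this is exactly the structural feature that makes LEG sample downstream variables independently of the value assigned to $X_d$ and thereby separates it from the \catlogtrick.
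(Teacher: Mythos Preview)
Your proposal is correct and follows essentially the same route as the paper: the factorisation of the log, the split $p(\mathbf{X})=p(\mathbf{X}_{\neq d})\,p(X_d\mid\mathbf{X}_{\neq d})$, the reintroduction of a pivot $X_d'$ to expand $p(\mathbf{X}_{\neq d})$ ancestrally (the paper phrases this equivalently as $\Exp{\mathbf{X}_{\neq d}}{g(\mathbf{X}_{\neq d})}=\Exp{\mathbf{X}}{g(\mathbf{X}_{\neq d})}$ since $g$ is constant in $X_d$), and the $\mathcal{O}(D)$ cost of the weighting product for the complexity bound all match. Your explicit Bayes-rule derivation of the weighting formula is a small addition the paper simply states without proof.
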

\begin{proof}
    We start by recalling the proof for the first equality in Equation~\eqref{eq:leg_trick}.
    \begin{align}
        \Exp{ \mathbf{X} \sim p(\mathbf{X}) }{f(\mathbf{X})\partial_\lambda \log p(\mathbf{X})} &=
        \Exp{ \mathbf{X} \sim p(\mathbf{X}) }{f(\mathbf{X})\partial_\lambda \log \prod_{d=1}^Dp(X_d\mid\mathbf{X}_{<d})} \nonumber\\
        &=\sum_{d=1}^D\Exp{ \mathbf{X} \sim p(\mathbf{X}) }{f(\mathbf{X})\partial_\lambda \log p(X_d\mid\mathbf{X}_{<d})} \nonumber\\
        &= \sum_{d=1}^D \Exp{\mathbf{X}_{\neq d} \sim p(\mathbf{X}_{\neq d})}{\underbrace{\Exp{X_d \sim p(X_d\mid\mathbf{X}_{\neq d})}{ f(\mathbf{X})\partial_\lambda \log p(X_d\mid\mathbf{X}_{<d}) }}_\text{$=g(\mathbf{X}_{\neq d})$}}
        \label{eq:intermediate}.
    \end{align}
    By noting that $p(X_d\mid\mathbf{X}_{\neq d})=p(X_d\mid\text{MB}(X_d))$, we obtain the first result.
    
    Regarding the second equality in Equation~\eqref{eq:leg_trick}, we observe that 
    \begin{align}
        \Exp{\mathbf{X}_{\neq d}\sim p(\mathbf{X}_{\neq d})}{g(\mathbf{X}_{\neq d})}
    \end{align}
    in Equation~\eqref{eq:intermediate} can be equivalently rewritten as
    \begin{align}
        \Exp{\mathbf{X}_{\neq d}\sim p(\mathbf{X}_{\neq d})}{g(\mathbf{X}_{\neq d})} &= \Exp{\mathbf{X}\sim p(\mathbf{X})}{g(\mathbf{X}_{\neq d})} \nonumber\\
        &=\mathbb{E}_{\mathbf{X}_{<d}\sim p(\mathbf{X}_{<d})}\mathbb{E}_{X_d'\sim p(X_d'\mid\mathbf{X}_{<d})}\Exp{\mathbf{X}_{>d}\sim p(\mathbf{X}_{>d}\mid X_d',\mathbf{X}_{<d})}{g(\mathbf{X}_{\neq d})}
    \end{align}
    By plugging this result into Equation~\eqref{eq:intermediate} we obtain the desired result for the second equality in Equation~\eqref{eq:leg_trick}.
    
    The Monte Carlo estimate trivially follows by sampling $N$ pivots from the first three expectations in Equation~\eqref{eq:leg_trick} and by taking the exact weighted average of score evaluations.

    Regarding computational complexity, we observe that computing the weighting function in the Monte Carlo estimate requires $\mathcal{O}(D)$ evaluations. Also, the estimate requires to iterate over three summations, thus having a computational requirement of $\mathcal{O}(N \cdot D \cdot K)$. By combining these two results, we obtain the overall complexity of $\mathcal{O}(N \cdot D^2 \cdot K)$.
\end{proof}

Note that this result does not immediately follow from the expressions provided by~\citet{titsias2015local}, as they do not explicitly study the case of a shared parameter space across different variables. We do show, however, that their assumption of a separate parameter space can be removed, allowing a one-to-one comparison to the \catlogtrick. We give an overview of the main differences between LEG and the \catlogtrick in Table~\ref{tab:comparison}. To be precise, we will refer to the following two expressions for LEG and CatLog that clearly indicate the differences between the two.
\begin{align}
    \label{eq:leg_trick_clear}
    &\text{LEG} = 
    \sum_{d=1}^D 
    \Exp{
        (\mathbf{X}_{<d},{\color{red_salsa} X_d'},\mathbf{X}_{>d})\sim p(\mathbf{X}_{<d},{\color{red_salsa}X_d'}, \mathbf{X}_{>d})
        }{
        \Exp{
            \mathbf{X}_d\sim {\color{celadon_blue}p(X_d\mid \mathbf{X}_{\neq d})}
            }{
            f(\mathbf{X})\partial_\lambda \log p(X_d\mid \mathbf{X}_{<d})}
        } \\
    \label{eq:catlog_trick_clear}
    &\text{CatLog} = 
    \sum_{d=1}^D 
    \Exp{
        (\mathbf{X}_{<d},{\color{red_salsa}X_d},\mathbf{X}_{>d})\sim p(\mathbf{X}_{<d},{\color{red_salsa}X_d},\mathbf{X}_{>d})
        }{
        f(\mathbf{X}_{\neq d}, X_d)\partial_\lambda \log p(X_d\mid \mathbf{X}_{<d})}
\end{align}

\begin{table}[b]
    \caption{Main differences between LEG and CatLog. The different nature of the applied tricks result in different computational complexities and related sampling strategies.}
    \begin{center}
    \begin{tabular}{cccc}
        \toprule
        Name    &   Trick  &   Complexity  &   Relation to Sampling  \\
        \midrule
        LEG     & 
        Equation~\eqref{eq:leg_trick_clear}
                                & $\mathcal{O}(N\cdot D^2 \cdot K)$  & Gibbs sampling \\
        CatLog  &   
        Equation~\eqref{eq:catlog_trick_clear}
        &   $\mathcal{O}(N\cdot D \cdot K)$    &   Ancestral sampling \\
        \bottomrule
    \end{tabular}
    \end{center}
    \label{tab:comparison}
\end{table}

While the most important theoretical difference is the presence of the weighing factor $p(X_d \mid \text{MB}(X_d)) = p(X_d \mid \mathbf{X}_{\neq d})$, there is also a practical difference. Indeed, \indecater allows for new samples to be drawn for each variable while Algorithm 1 in~\cite{titsias2015local} does not do so. This difference translates into a significant differential in performance as observed in the neural-symbolic experiment.

\end{document}